\documentclass[11pt,a4paper]{article}

\usepackage[margin=1in]{geometry}
\usepackage[T1]{fontenc}
\usepackage{lmodern}
\usepackage{microtype}
\usepackage[round]{natbib}

\usepackage[hyphens]{url}
\usepackage{graphicx}
\usepackage{booktabs}
\usepackage{amsmath,amssymb,amsthm}
\usepackage{placeins}
\usepackage{array}
\usepackage[unicode,hidelinks]{hyperref}
\hypersetup{
  pdftitle={Grounding Predictive Reliability through Masked Observability},
  pdfauthor={Shang Gao, Bernhard Sick, Franz Götz-Hahn}
}

\usepackage{authblk}

\title{Grounding Predictive Reliability through Masked Observability}
\author[1]{Shang Gao\thanks{ORCID: \href{https://orcid.org/0009-0005-5474-1888}{0009-0005-5474-1888}}}
\author[1]{Bernhard Sick\thanks{ORCID: \href{https://orcid.org/0000-0001-9467-656X}{0000-0001-9467-656X}}}
\author[1]{Franz G\"otz-Hahn\thanks{ORCID: \href{https://orcid.org/0000-0003-3465-5040}{0000-0003-3465-5040}}}
\affil[1]{University of Kassel, Germany}
\date{}

\newcommand{\figthreeyaxis}{Axes share a log scale.}

\newtheorem{proposition}{Proposition}

\newcommand{\method}{MORSE}
\newcommand{\udata}{U_{\mathrm{data}}}
\newcommand{\umean}{U_{\mathrm{mean}}}
\newcommand{\urel}{U_{\mathrm{rel}}}
\newcommand{\upred}{U_{\mathrm{pred}}}
\newcommand{\robs}{R_{\mathrm{obs}}}
\newcommand{\rstar}{R^{\ast}_{\mathrm{obs}}}
\newcommand{\rnorm}{r_{\mathrm{obs}}}
\newcommand{\vref}{V_{\mathrm{ref}}}
\newcommand{\Lmean}{L_{\mathrm{mean}}}
\newcommand{\sg}{\operatorname{sg}}
\newcommand{\E}{\mathbb{E}}
\newcommand{\Var}{\operatorname{Var}}
\newcommand{\Cov}{\operatorname{Cov}}

\begin{document}
\maketitle

\begin{abstract}
An uncertainty estimate is meaningful only if its semantics can be tied to observable predictive behavior. Normal-Inverse-Gamma (NIG) evidential regression provides a structured decomposition of predictive variability into data-scale and predictive-mean components, but marginal likelihood does not identify their allocation. We address this semantic ambiguity in masked representation learning. MORSE (Masked Observability for Reliable Structured Encoding) predicts the same hidden target under controlled changes in visible context and uses normalized cross-context variation to ground predictive-mean uncertainty, while a Gaussian objective learns the conditional mean and data scale. The normalized variation plays the role of the NIG factor $1/\nu$, which the marginal leaves unidentified; this yields a policy-relative observability variance that measures dependence on visible evidence, distinct from variation induced by resampling the training set. On a Gaussian-process benchmark with a contextual reference built from analytic Bayes conditional means, MORSE has the lowest log-variance error among compared evidential, ensemble, and dropout estimators in all nine tested settings, while the estimator closest to training-set variation perturbs the model rather than its context in every setting, so the two references separate the estimators empirically. MORSE amortizes observability estimation into one forward pass per observation state. On a heteroscedastic pulse benchmark, its predictor's cross-context variance approaches a large-sample reference as training data increase. MORSE also improves low-label transfer on PTB-XL and in-distribution trajectory prediction on IMPTC.
\end{abstract}

\section{Introduction}

Two regression models can output the same predictive distribution, meaning the same mean and total variance, and still disagree about why they are uncertain: one blames noise in the target, the other blames what it cannot infer from the evidence it sees. If nothing in the data distinguishes these two accounts, an uncertainty estimate that claims to make the distinction reports a quantity it was never taught to produce. This is the situation in Deep Evidential Regression (DER)~\citep{amini2020deep}, whose Normal-Inverse-Gamma (NIG) head names a data-scale component and a predictive-mean component that the likelihood then collapses: a continuum of allocations yields the same fit (Figure~\ref{fig:motivation}). Recent work has made this concrete: the reported uncertainty is leaky~\citep{meinert2023unreasonable}, unidentifiable~\citep{jurgens2024faithfully}, and often useful only as an out-of-distribution signal rather than a faithful one~\citep{shen2024mirage}. Because the marginal cannot fix the allocation, its meaning has to come from information beyond the likelihood. The missing ingredient is an observable, target-specific behavioral signal for the allocation.

Masked representation learning supplies a controlled way to vary the evidence available about a target. We keep the target hidden, sample different visible contexts, and measure the variance of the resulting predictions with the model held fixed. By the law of total variance, this response is the between-context component of the mask-induced predictive mixture. Dividing it by the fitted local data scale gives a dimensionless response that occupies the same structural position as $\umean/\udata=1/\nu$ in the NIG decomposition. This correspondence provides an observable, policy-relative training target for the factor left unidentified by the marginal likelihood.

\begin{figure*}[t]
    \centering
    \includegraphics[width=\textwidth]{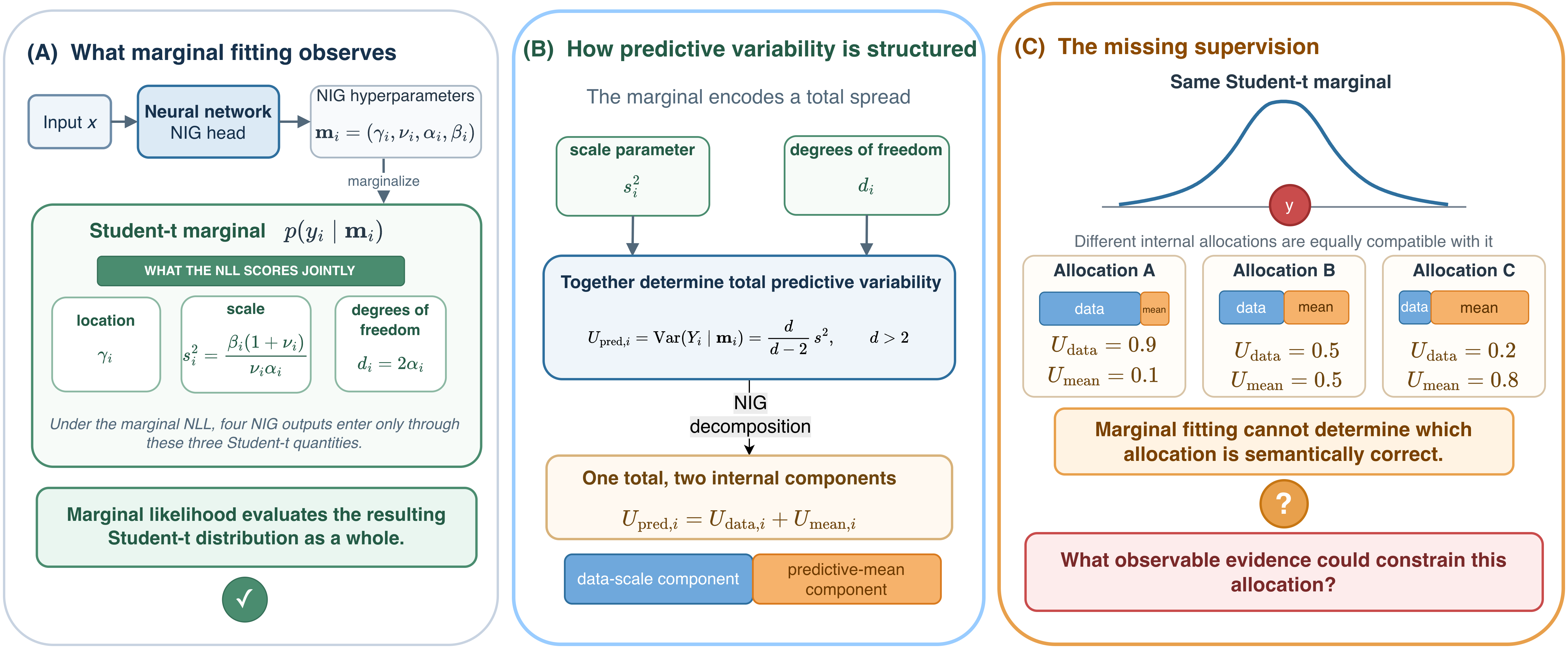}
    \caption{\textbf{Marginal fitting leaves the internal allocation open.} (A) Under standard DER, the four NIG outputs enter the marginal likelihood only through the location, scale, and degrees of freedom of a Student-$t$ distribution. (B) Its total predictive variance can be written as a data-scale component plus a predictive-mean component. (C) The same marginal is compatible with different internal allocations, so fitting it cannot determine which allocation is appropriate for a target. MORSE supplies the missing evidence through controlled changes in visible context.}
    \label{fig:motivation}
\end{figure*}

MORSE (Masked Observability for Reliable Structured Encoding) learns this target from $K$ masked views that share the same hidden positions. A Gaussian objective fits the predictive mean and data scale, while magnitude and ranking losses supervise $1/\nu$ using the detached cross-context response. This separates their direct output-level supervision paths. The score is trained to estimate context sensitivity under the chosen masking policy. We compare it with contextual and training-set references and test the pretrained representations on downstream tasks.

Our contributions are:
\begin{itemize}
    \item We observe that the normalized between-context variance of a mask-induced predictive mixture occupies the position of the NIG factor $1/\nu$, the coordinate the marginal likelihood leaves free, and adopt this correspondence as the definition of what $1/\nu$ reports. This correspondence gives the allocation a policy-relative observable target and gives $\nu$ an observable training target as an effective count of the evidence the policy varies; we also characterize the population limit of the target (Section~\ref{sec:theory}).
    \item We introduce MORSE, a common-target $K$-view objective that supervises this factor from context sensitivity, separates it from the direct residual path, and amortizes the multi-view response into one forward pass per observation state (Section~\ref{sec:morse}).
    \item We validate the learned score with a Gaussian-process audit in nine settings against two distinct references, contextual and training-set variation, stress-test its mechanism and population behavior on a pulse benchmark, and test downstream representation transfer on PTB-XL and IMPTC (Sections~\ref{sec:design}--\ref{sec:results}).
\end{itemize}

\section{Related Work}

DER trains the NIG head with the induced Student-$t$ negative log-likelihood and an evidence regularizer that scales total evidence by the prediction residual~\citep{amini2020deep}. \citet{meinert2023unreasonable} show that this coupling lets observation noise drive the reported uncertainty, and propose two remedies: normalizing the residual by the predictive Student-$t$ width (RN-DER, their Eq.~11) and a Gaussian mean-variance reparameterization that drops $\alpha$ (Gaussian-MV, their Eq.~12). Natural posterior networks instead tie the evidence count to a learned density~\citep{charpentier2022natural}. \citet{bengs2022pitfalls} expose a more general pitfall: minimizing a second-order loss does not incentivize faithful epistemic uncertainty. \citet{jurgens2024faithfully} confirm this empirically by comparing second-order predictions with reference distributions from repeated training-set sampling, and conclude that evidential scores support relative rather than absolute interpretation. \citet{jimenez2026position} argue that every current second-order estimator captures only a subset of the sources of epistemic uncertainty. These criteria characterize uncertainty induced by model or training-set variation; MORSE targets a complementary source, arising from changes in the visible evidence.

A second line of work supplies what the fitting objective cannot: observable variation in the predictions themselves. MC dropout and deep ensembles expose this variation through stochastic inference and independently trained models~\citep{gal2016dropout,lakshminarayanan2017simple}, and their disagreement is widely treated as an epistemic signal. Ensemble distribution distillation trains one model to reproduce the mean and spread of an ensemble~\citep{malinin2020ensemble}, and \citet{shen2024mirage} reinject model uncertainty into evidential learning by distilling a bootstrap ensemble; self-distribution distillation trains such a teacher and its student within one network~\citep{fathullah2022self}. What these methods resample is the model: parameters, initializations, or training sets. Because the observed input is held fixed, the resulting variation does not directly measure how much a prediction changes when the visible evidence itself changes, although it can correlate with that change (Section~\ref{sec:results}). Test-time augmentation perturbs the input instead and reads the resulting prediction variance as aleatoric uncertainty~\citep{ayhan2018tta,wang2019tta}.

Masked autoencoders learn representations by reconstructing hidden content from visible input~\citep{he2022mae}, adapted to time series through masked patch reconstruction~\citep{nie2023patchtst}, multiple masked views~\citep{dong2024simmtm}, and spatio-temporal masking for ECG~\citep{na2024stmem}; evidential objectives have also been combined with temporal imputation~\citep{ragab2025evidentially}. This setting permits a complementary intervention: the model is held fixed while the evidence available about a chosen target is varied, so predictive variation becomes attributable to observability rather than to the estimator. SimMTM aggregates several masked views to improve reconstruction; MORSE instead keeps the per-view predictions separate and learns their disagreement as a score. To our knowledge, no prior work uses the variation that masked pretraining already generates as target-specific supervision for the evidential allocation.

\section{Making the Allocation Observable}
\label{sec:theory}

\paragraph{Marginal allocation.}
For target $i$, a positive NIG-shaped head outputs $\mathbf m_i=(\gamma_i,\nu_i,\alpha_i,\beta_i)$ with $\nu_i>0$, $\alpha_i>1$, and $\beta_i>0$. Marginalizing the Gaussian mean and variance induces a Student-$t$ distribution with location $\gamma_i$, scale $s_i^2=\beta_i(1+\nu_i)/(\nu_i\alpha_i)$, and $d_i=2\alpha_i$ degrees of freedom (Figure~\ref{fig:motivation}A). The marginal likelihood therefore observes the four head outputs only through these three quantities. For $d_i>2$, its variance admits the structured allocation
\begin{align}
\Var(Y_i\mid\mathbf m_i)
&=\underbrace{\frac{\beta_i}{\alpha_i-1}}_{U_{\mathrm{data},i}}
+\underbrace{\frac{\beta_i}{\nu_i(\alpha_i-1)}}_{U_{\mathrm{mean},i}},
\label{eq:nig-decomposition}\\
U_{\mathrm{rel},i}
&:=\frac{U_{\mathrm{mean},i}}{U_{\mathrm{data},i}}=\frac{1}{\nu_i}.
\label{eq:urel-definition}
\end{align}
This allocation is not identified by the marginal. Fix any Student-$t$ marginal $(\gamma,s^2,d)$ with $d>2$. For any $\nu>0$, the choice $\alpha=d/2$ and $\beta=\alpha s^2\nu/(1+\nu)$ preserves the location, scale, and degrees of freedom, while Equation~\eqref{eq:urel-definition} gives $\urel=1/\nu$. As $\nu$ spans the positive reals, the same marginal admits every positive relative allocation (Proposition~\ref{prop:allocation}). An auxiliary regularizer may select a numerical parameterization, but it does not supply target-specific evidence for what the allocation means. MORSE retains $(\udata,\urel,\umean)$ only as structured coordinates and uses the Gaussian-form fitting objective defined below.

\paragraph{Observable bridge.}
We obtain such evidence by keeping a target hidden while varying its visible context. Fixing sequence $X$, target $t$, and predictor $\theta$, let $M\sim\pi_t$ denote an observation state conditioned on $t$ remaining hidden. Let the corresponding first-order predictive component have mean $\mu_\theta(X,M)$ and finite variance $v_\theta(X,M)$. The policy induces a mixture of these components, for which the law of total variance gives
\begin{equation}
\Var_{q_{\theta,\pi_t}}(Y_t\mid X)
=\E_{M\sim\pi_t}[v_\theta]
+\underbrace{\Var_{M\sim\pi_t}[\mu_\theta]}_{\robs(X,t;\pi_t)}.
\label{eq:mask-mixture-variance}
\end{equation}
Hence $\robs$ is the observable between-context variance of predictive means under the specified observation policy (Proposition~\ref{prop:between-context}). The decomposition has the within/between form of Rubin's combining rules for multiple imputation~\citep{rubin1987multiple}, and $\robs$ is the main-effect variance of the observation state in variance-based sensitivity analysis~\citep{sobol2001global}. It is relative to both that policy and the trained predictor. To construct a dimensionless target, MORSE uses the fitted local data scale as an operational proxy for $v_\theta$. With $\overline{U}_{\mathrm{data}}(X,t;\pi_t)=\E_{M\sim\pi_t}[\udata(M)]$,
\begin{equation}
\rnorm(X,t;\pi_t):=\frac{\robs(X,t;\pi_t)}{\overline{U}_{\mathrm{data}}(X,t;\pi_t)+\epsilon}.
\label{eq:bridge}
\end{equation}
\paragraph{Matching decompositions.}
With this choice, the two variance decompositions sit side by side:
\begin{equation}
\underbrace{\udata\,(1+\urel)}_{\text{NIG predictive variance}}
\qquad\text{and}\qquad
\underbrace{\overline{U}_{\mathrm{data}}\,(1+\rnorm)}_{\text{mask-mixture variance}}.
\label{eq:correspondence}
\end{equation}
The observability response $\rnorm$ occupies the position of $\urel=1/\nu$: both measure the variance of the mean in units of the data variance. Proposition~\ref{prop:allocation} shows that the marginal leaves exactly this coordinate free, and MORSE supplies its value by adopting the correspondence $\urel\leftrightarrow\rnorm$ as the definition of what $\urel$ reports. Then $\umean=\udata\urel$ tracks $\robs$, and the observation policy, rather than a Bayesian posterior, sets its meaning. The correspondence also gives $\nu$ a measured meaning. In the NIG hierarchy, $\nu$ counts the virtual observations that support the mean~\citep{amini2020deep}; matching $1/\nu$ to $\rnorm$ gives the effective count $\nu_{\mathrm{eff}}=\overline{U}_{\mathrm{data}}/\robs$ (Appendix~\ref{app:formal}). If the predictive mean varies across contexts with one quarter of the noise variance, the varying context is worth about four noisy observations of the target. Because $\rstar=0$ both when every admissible context pins down the target's mean and when no admissible context informs it (Proposition~\ref{prop:population}), $\nu_{\mathrm{eff}}$ counts only evidence that the policy varies; uncertainty about evidence withheld in every observation state is reported through $\udata$, not $\umean$.

\paragraph{Population limit.}
Let $\mu^\ast(X,M)$ be the population-optimal mean predictor for the same policy and write $\delta=\mu_\theta-\mu^\ast$. Then
\begin{equation}
\robs(\theta)=\rstar+\Var_M(\delta)+2\Cov_M(\mu^\ast,\delta),
\label{eq:population-decomposition}
\end{equation}
where $\rstar=\Var_M(\mu^\ast)$ (Proposition~\ref{prop:population}). As the predictor converges, $\robs(\theta)\to\rstar$, which is positive whenever changing the visible context changes the optimal mean. Variation of the prediction across training sets at a fixed mask involves only $\delta$, so it vanishes in the same limit up to optimization noise. When $\rstar>0$, contextual and training-set variation converge to different limits, so any agreement between them is a finite-sample coincidence rather than an identity; our evaluation compares the learned score with each of them separately.

\section{MORSE}
\label{sec:morse}

Figure~\ref{fig:method} operationalizes the correspondence above, and three design choices follow from it. The target is normalized by the data scale, so it is dimensionless. It supervises $\urel=1/\nu$ alone, so its gradients reach $\nu$ but not the data scale, which separates the two supervision paths at the output. And the fitting scale uses the bounded factor $1+\lambda_{\mathrm{cond}}h(\rnorm)$, the counterpart of $1+\urel$ in the NIG predictive variance.

\begin{figure*}[t]
    \centering
    \includegraphics[width=\textwidth]{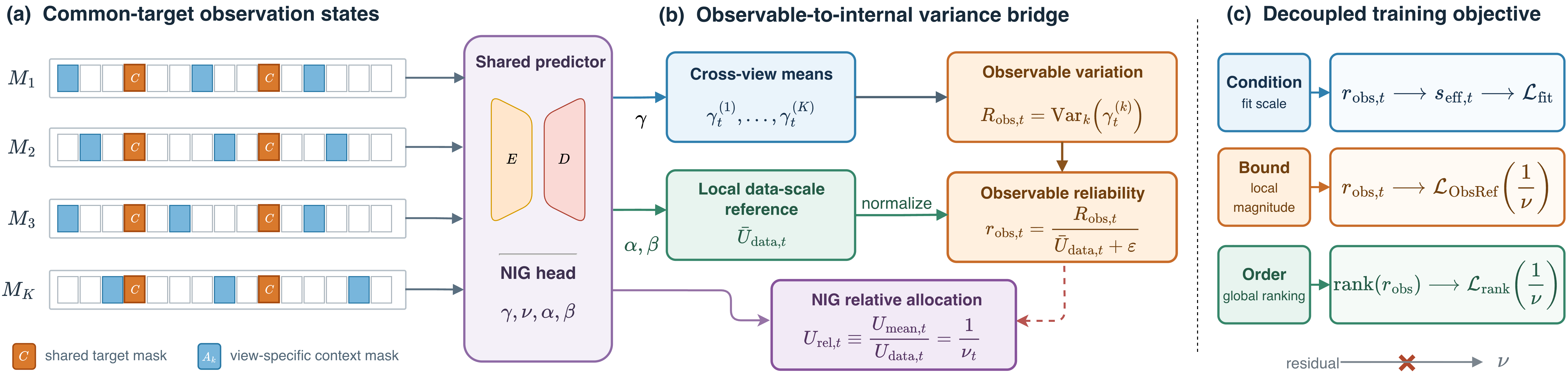}
    \caption{\textbf{MORSE overview.} (a) Common targets $C$ remain hidden in all $K$ views, while view-specific masks $A_k$ alter their supporting context. (b) Cross-context predictive-mean variance is normalized by the local data scale to form the observability response, which takes the position of $\urel=1/\nu$ in the NIG decomposition. (c) The detached response conditions fitting, bounds the local magnitude of the relative mean-uncertainty factor, and supervises its ordering. All masked positions train reconstruction, whereas only $C$ provides matched $K$-view supervision.}
    \label{fig:method}
\end{figure*}

\paragraph{Finite-view response.}
We sample a common target set $C$ and independently draw additional masks $A_k\subseteq C^c$ under a fixed masking policy, allowing the $A_k$ to overlap. For each common target, MORSE computes
\begin{align}
M_k&=C\cup A_k,\qquad k=1,\ldots,K,
\notag\\
\bar{\gamma}_t&=\frac{1}{K}\sum_{k=1}^{K}\gamma_{k,t},
\notag\\
\widehat R_{\mathrm{obs},t}&=\frac{1}{K}\sum_{k=1}^{K}\left(\gamma_{k,t}-\bar{\gamma}_t\right)^2,
\label{eq:obs-variance}\\
U_{\mathrm{data},t}^{\mathrm{ref}}&=\frac{1}{K}\sum_{k=1}^{K}U_{\mathrm{data},k,t},
\notag\\
r_{\mathrm{obs},t}&=\frac{\sg\!\left(\widehat R_{\mathrm{obs},t}\right)}{\sg\!\left(U_{\mathrm{data},t}^{\mathrm{ref}}\right)+\epsilon},
\qquad t\in C,
\label{eq:obs-response}
\end{align}
where $\gamma_{k,t}=\mu_\theta(X,M_k)_t$ is the NIG location predicted for target $t$ in view $k$ and $\sg$ denotes the stop-gradient operator. Only common targets define the detached response and the uncertainty losses, whereas every $t\in M_k$ contributes to reconstruction fitting. The view count $K$ and the masking policy are part of the operational definition of the target; full settings are given in Appendix~\ref{app:method}.

\paragraph{Conditioned fitting.}
We set $r_{\mathrm{obs},t}=0$ outside $C$ and use the bounded gate $h(r)=r/(1+r)$. The effective scale and fitting objective are
\begin{equation}
\begin{aligned}
s_{k,t}^{\mathrm{eff}}&=U_{\mathrm{data},k,t}\left[1+\lambda_{\mathrm{cond}}h(r_{\mathrm{obs},t})\right],\\
\mathcal L_{\mathrm{fit}}&=\frac{1}{K}\sum_{k=1}^{K}\operatorname*{Avg}_{t\in M_k}
\Bigl[\log\!\left(s_{k,t}^{\mathrm{eff}}+\epsilon\right)\\
&\qquad+\frac{(y_t-\gamma_{k,t})^2}{s_{k,t}^{\mathrm{eff}}+\epsilon}\Bigr].
\end{aligned}
\label{eq:fit}
\end{equation}
The gate limits the adjustment to $U_{\mathrm{data},k,t}\leq s_{k,t}^{\mathrm{eff}}<(1+\lambda_{\mathrm{cond}})U_{\mathrm{data},k,t}$. Because the response is detached, it conditions fitting without receiving gradients from $\mathcal L_{\mathrm{fit}}$. The effective scale is used only for optimization; reported moments use the base $U_{\mathrm{data},k,t}$ and $U_{\mathrm{mean},k,t}$. Thus $\mathcal L_{\mathrm{fit}}$ is Gaussian-form, contains no $\nu$, and does not invoke the Student-$t$ marginal.

\paragraph{Bound and order.}
Finite views provide a noisy local response rather than an exact label. We therefore define $T_t=\sg(r_{\mathrm{obs},t})+f_{\mathcal B}$, where $f_{\mathcal B}$ is a detached offset equal to the $q_f$-quantile of the response over valid common targets in the batch, lower-bounded by $\epsilon$. With $[x]_+=\max(x,0)$, magnitude supervision is
\begin{equation}
\ell_{\mathrm{ref}}(U,T)=[\log T-\log U]_+ +\omega[\log U-\log T-m]_+,
\label{eq:ref-loss}
\end{equation}
and averaging $\ell_{\mathrm{ref}}(U_{\mathrm{rel},k,t},T_t)$ over views and common targets defines the observable-reference loss $\mathcal L_{\mathrm{ref}}$ on the real-data tasks; the synthetic trainer applies $\ell_{\mathrm{ref}}$ once per common target to the view average $\bar U_{\mathrm{rel},t}=K^{-1}\sum_k U_{\mathrm{rel},k,t}$ (Appendix~\ref{app:method}). For ordering, let $z_t=K^{-1}\sum_k\log U_{\mathrm{rel},k,t}$ and let $H$ and $L$ index valid common targets in the upper and lower response quartiles. Then
\begin{align}
\mathcal L_{\mathrm{rank}}&=\E_{i\in H,\,j\in L}\left[\rho-(z_i-z_j)\right]_+,
\label{eq:rank-loss}\\
\mathcal L_{\method}&=\mathcal L_{\mathrm{fit}}+\lambda_{\mathrm{ref}}\mathcal L_{\mathrm{ref}}+\lambda_{\mathrm{rank}}\mathcal L_{\mathrm{rank}}.
\label{eq:morse-objective}
\end{align}
The envelope admits the zero-loss range $T_t\leq U_{\mathrm{rel},k,t}\leq e^mT_t$, constraining local magnitude, while $\mathcal L_{\mathrm{rank}}$ preserves the ordering induced by cross-context sensitivity. Because the envelope is a quantile-type loss, its population minimizer is a quantile-type functional of the finite-view response rather than its mean (Appendix~\ref{app:envelope}). Since $\mathcal L_{\mathrm{fit}}$ contains no $\nu$, these losses alone supervise $\urel$ at the output, and the residual has no direct path to it. This output-level decoupling does not preclude co-adaptation through the shared encoder.

\section{Evaluation Design}
\label{sec:design}

We ask three questions. \textbf{Semantic agreement:} does the learned score align with policy-relative contextual variation, and how does this differ from training-set variation? \textbf{Mechanism:} which MORSE components control magnitude, interval behavior, and ordering? \textbf{Transfer:} does MORSE pretraining improve low-label downstream performance?

\paragraph{Gaussian-process benchmark.}
The GP benchmark provides a contextual reference built from analytic Bayes conditional means. Length-128 windows are drawn from a zero-mean Mat\'ern-$5/2$ Gaussian process with unit signal variance and length scale .05, observed with noise whose standard deviation switches from .05 to .30 at a boundary sampled uniformly in $[.2,.8]$. Given the known kernel and noise variances, the Bayes conditional mean for any visible set is available in closed form~\citep{rasmussen2006gaussian}. Its variance over 128 fixed masks from the training policy defines the contextual reference, a finite-mask approximation of the population contextual variance; the learned models never receive the kernel, the noise variances, or the per-window noise boundary, so the reference is a known-noise Bayes target (Appendix~\ref{app:gp}). Each method is trained on five paired datasets of $N=2048$ windows and evaluated on a shared test bank of 1,024 windows. Alongside DER, RN-DER, Gaussian-MV, and MORSE, we evaluate five-member deep and bootstrap ensembles and 50-pass MC dropout with and without attention-probability dropout. All learned scores are averaged over the same four evaluation contexts. A separate bank of 20 first-order models trained on disjoint datasets defines the training-set reference $\vref$ (Appendix~\ref{app:gp}).

\paragraph{Pulse benchmark.}
The pulse benchmark serves as a mechanism and population-limit stress test. It extends the heteroscedastic pulse diagnostic of~\citet{meinert2023unreasonable}, which is pointwise and has fixed pulse and noise locations, to length-128 windows with independently sampled locations, so there is context to withhold. The observation-noise standard deviation changes from .01 to .10 at the sampled boundary, so noise level and reconstruction difficulty vary independently across the window. All second-order methods share the same two-layer Transformer, serialized datasets, normalization, optimization budget, and common-target masks, with $K=4$, a per-view mask rate of .50, a shared-target rate of .25, and ten paired seeds. For the training-size audit, we train MORSE at $N\in\{512,2048,8192,32768\}$ with ten paired anchors per size and compare the cross-context variance of its predictor with a large-sample reference from networks trained on unlimited freshly generated data (Appendix~\ref{app:pulse-audit}).

\paragraph{Real datasets.}
PTB-XL and IMPTC serve as downstream representation-transfer experiments. PTB-XL~\citep{wagner2020ptbxl} uses public 100-Hz, 12-lead ECGs and five diagnostic superclasses; we use the official patient-separated folds 1--8/9/10, pretrain only on the training folds, and fine-tune with 5\% or 10\% labeled training records. On IMPTC~\citep{hetzel2023imptc}, we construct a three-class future-event prediction task from 50-step observation and prediction windows with 46 motion and contextual features; the in-distribution setting uses 30\% labels and partial fine-tuning, and a separate evaluation holds out environment 5. Both use five seeds and validation-only checkpoint selection.

\paragraph{Metrics.}
Agreement with a reference variance is measured by log-variance MAE (scale), Spearman correlation (ordering only), and center RMSE (location of the predictive mean). The pulse mechanism study adds predictive fit, noise sensitivity ($L_{\mathrm{rel}}$ and $\Lmean$, absolute log-ratios of the high- to low-noise means of $\urel$ and $\umean$), error ranking, and the coverage gap and width of 90\% intervals (Appendix~\ref{app:synthetic}). Downstream evaluation uses task metrics, Brier score, ECE, and AURC~\citep{brier1950verification,guo2017calibration,geifman2017selective}. We report mean $\pm$ sample standard deviation over paired anchors or seeds; paired intervals are in the appendix.

\section{Results}
\label{sec:results}

\begin{table}[t]
\caption{\textbf{Agreement with two distinct GP uncertainty targets} ($N=2048$; five paired anchors; LogVar MAE, mean $\pm$ sample SD; lower is better). Contextual: variance of analytic Bayes conditional means over 128 fixed masks; Appendix~\ref{app:gp-scan} re-scores these checkpoints on an independent bank. Training set: variance across 20 independent first-order fits at fixed masks. All learned scores average the same four evaluation contexts; evidential heads, including MORSE, use one forward pass per context, MC dropout 50 draws, and each ensemble five members. Bold marks the best unrounded mean. The no-attention MC dropout row is retrained without attention-probability dropout (Appendix~\ref{app:mc-sites}).}
\label{tab:gp-bayes}
\centering
\footnotesize
\setlength{\tabcolsep}{3pt}
\begin{tabular}{lcc}
\toprule
& \multicolumn{2}{c}{LogVar MAE $\downarrow$}\\
\cmidrule(lr){2-3}
Method & Contextual & Training set \\
\midrule
DER & $2.150 \pm 0.105$ & $3.432 \pm 0.105$ \\
RN-DER & $2.414 \pm 0.071$ & $3.704 \pm 0.070$ \\
Gaussian-MV & $4.336 \pm 0.111$ & $5.632 \pm 0.111$ \\
Deep ensemble (5) & $2.166 \pm 0.140$ & $0.915 \pm 0.091$ \\
Bootstrap ensemble (5) & $1.607 \pm 0.092$ & $\mathbf{0.548 \pm 0.027}$ \\
MC dropout (50) & $0.807 \pm 0.013$ & $1.185 \pm 0.107$ \\
\quad no attention (50) & $1.012 \pm 0.065$ & $0.732 \pm 0.070$ \\
MORSE & $\mathbf{0.774 \pm 0.022}$ & $1.499 \pm 0.098$ \\
\bottomrule
\end{tabular}
\end{table}

\begin{figure*}[t]
    \centering
    \includegraphics[width=\textwidth]{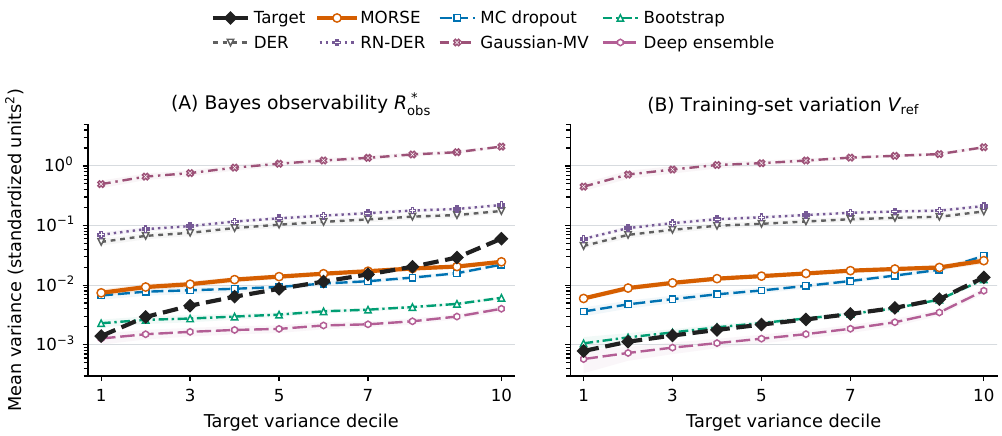}
    \caption{\textbf{Different estimators align with different variance targets.} Mean variance of each estimator within ten equal-count deciles of each target on the GP benchmark. \figthreeyaxis{} The deciles are formed separately in each panel from that panel's target, and all estimators share them; lines are means over five anchors and bands are anchor standard deviations. (A) Contextual target: variance of analytic Bayes conditional means over 128 fixed masks. (B) Training-set target $\vref$: variance across 20 independently trained first-order models at fixed masks.}
    \label{fig:gp-two-references}
\end{figure*}

\paragraph{Contextual alignment and target separation.}
Table~\ref{tab:gp-bayes} compares each learned score with both targets in the default setting. Against the contextual reference, MORSE has the lowest LogVar MAE (.774) and, in this setting, the highest Spearman correlation among the primary comparators on all five anchors; MC dropout follows at .807. Against training-set variation, the ranking reverses: the bootstrap ensemble is closest (.548), while MORSE is farther (1.499). Figure~\ref{fig:gp-two-references} shows that the reversal is a difference in level, so the two references are separable targets: an estimator can align closely with one and not the other. MC dropout lies between the references because its attention-probability dropout also removes context: retrained without it, its contextual error rises to 1.012 and its training-set error falls to .732, and the shift recurs in all nine settings (Appendix~\ref{app:mc-sites}). Across nine settings varying length scale and noise shape, MORSE keeps the lowest mean contextual LogVar MAE against both variants, and the estimator closest to $\vref$ always perturbs the model rather than the context (Appendix~\ref{app:gp-scan}). Against all-site MC dropout, the margin is large only at the shortest length scale, two paired intervals at length scale .10 include zero, and MC dropout has the higher Spearman correlation in five settings; the stable advantage is in scale, not ordering.

\paragraph{Score quality versus predictor quality.}
The GP result concerns the learned score, not the underlying predictor. All seven comparators have lower center RMSE than MORSE (.129--.139 versus .144), and RN-DER's raw 128-mask variance is closer to the contextual reference than MORSE's (.615 versus .673). MORSE's score also compresses the contextual target's range, about five-fold too high in the lowest decile and 2.4-fold too low in the highest. On the pulse proxy, five-member ensembles are closer than MORSE's score to both references, and MORSE's score is itself closer to $\vref$ than to the contextual proxy (LogVar MAE 1.545 versus 3.332; Table~\ref{tab:pulse-extended}), so the separation observed on GP does not carry over to that generator.

\paragraph{Amortization.}
Each four-context MORSE score in Table~\ref{tab:gp-bayes} costs four deterministic forward passes, against 200 for MC dropout and 20 for each five-member ensemble. At the same cost, the raw variance of a trained predictor over four masks has LogVar MAE near 1.2, against .774 for MORSE's score, and repeated masking needs 16 to 128 passes to overtake it (Appendix~\ref{app:gp}). This is an inference-time result; training uses four masked views per step.

\paragraph{Training-size audit.}
On the pulse benchmark, the cross-context variance of MORSE's predictor approaches the large-sample reference as data grow: its LogVar MAE falls from 1.987 at $N=512$ to .917 at $N=32768$ on all ten paired anchors (Table~\ref{tab:pulse-n}). The learned score follows the same trend at a larger error (4.477 to 1.687). Meanwhile the training-set reference contracts from .043 to .020, as Proposition~\ref{prop:population} predicts.
\paragraph{Checks of the correspondence.}
Equation~\eqref{eq:correspondence} suggests three checks. First, normalization removes units, not noise dependence; on the pulse benchmark, whose noise level changes tenfold, the noise sensitivity $L_{\mathrm{rel}}$ of the relative factor is .34 for MORSE against .53 to 1.49 for the evidential baselines (Table~\ref{tab:app-backbone-full}). Second, the normalized response is the right target: supervising $\urel$ with the raw cross-view variance instead raises GP scale error from .735 to 1.697 in a matched control (Appendix~\ref{app:components}). Third, Proposition~\ref{prop:envelope} predicts $\umean\approx 0.668\,\robs(\theta)$ under a Gaussian working assumption; the GP ratios (.50 to .85 on four anchors, .970 on the fifth) are near it, but the pulse ratios (3.6 to 4.4) are not, so the envelope does not fix the constant in general (Appendix~\ref{app:envelope}).

\paragraph{Mechanism.}
Table~\ref{tab:mechanism} (Appendix~\ref{app:synthetic}) isolates the roles of the components on the pulse benchmark. Among the model comparisons, RN-DER gives the lowest MSE and NLL and Gaussian-MV the strongest error ranking, while MORSE combines the lowest $\Lmean$, coverage gap, and interval width with near-best error ranking; a constant score can make $\Lmean$ look favorable, so we read it with coverage and width. Without ObsRef, interval width grows from 1.45 to 14.89; without Rank, $\Lmean$, coverage, and width worsen; restoring the residual path (MORSE-RC) slightly improves error ranking but degrades NLL, $\Lmean$, coverage, and width. A matched factorial over the components and heads on GP and pulse (Appendix~\ref{app:components}) assigns the scale gain to the supervised target: removing Bound raises GP scale error from .735 to 1.954 and restoring the residual path to 2.075, whereas Condition and Order mainly improve fitting and ranking.

\begin{table}[t]
\caption{PTB-XL low-label transfer at the 10\% label budget (five-seed mean $\pm$ sample standard deviation).}
\label{tab:ptb}
\centering
\small
\setlength{\tabcolsep}{4pt}
\begin{tabular}{lcccc}
\toprule
Method & AUROC $\uparrow$ & Macro-F1 $\uparrow$ & ECE $\downarrow$ & AURC $\downarrow$ \\
\midrule
Supervised & .827$\pm$.005 & .592$\pm$.011 & .069$\pm$.010 & .110$\pm$.011 \\
DAE & .826$\pm$.006 & .596$\pm$.012 & .079$\pm$.016 & .096$\pm$.008 \\
MAE & .842$\pm$.006 & .620$\pm$.005 & .043$\pm$.020 & .099$\pm$.008 \\
PatchTST & .823$\pm$.011 & .590$\pm$.015 & \textbf{.030}$\pm$.016 & .120$\pm$.014 \\
SimMTM & .849$\pm$.005 & .622$\pm$.008 & .043$\pm$.013 & .090$\pm$.007 \\
ST-MEM & .835$\pm$.010 & .604$\pm$.012 & .052$\pm$.020 & .106$\pm$.012 \\
\textbf{MORSE} & \textbf{.861}$\pm$.006 & \textbf{.638}$\pm$.007 & .041$\pm$.013 & \textbf{.085}$\pm$.003 \\
\bottomrule
\end{tabular}
\end{table}

\section{Discussion}
\label{sec:discussion}

In the GP experiments, observable supervision mainly improves variance magnitude. In the default setting, the evidential baselines have contextual Spearman correlations comparable to MORSE (.506--.526 versus .537), but substantially larger median variance ratios to the reference (8--79 versus 1.17; Tables~\ref{tab:gp-two-reference} and~\ref{tab:gp-context-intervals}). The matched ablations show the largest scale improvement from Bound, while Condition and Order affect fitting and ranking (Appendix~\ref{app:components}). Explicit, fixed-$\alpha$, and direct-variance heads achieve similar GP scale alignment, showing that the target can be learned with different parameterizations.

The GP results distinguish contextual from training-set variation empirically. Across all nine settings, MORSE has the lowest mean contextual log-variance error among the primary comparators, while the estimator closest to the training-set reference perturbs the model. Retraining MC dropout without attention-probability dropout raises its contextual error to 1.012 and lowers its training-set error to .732. This shift recurs in all nine settings (Appendix~\ref{app:mc-sites}). On the pulse proxy, ensembles are closer to both references, so the contextual advantage observed on GP is benchmark-dependent.

Variance alignment and interval coverage remain separate outcomes. MORSE's intervals cover 82\% at a nominal 95\% on GP and 83\% in the pulse reference audit. The envelope and ranking losses do not enforce nominal coverage; in the GP audit, MORSE-IS approaches nominal coverage while overestimating variance (Appendices~\ref{app:envelope} and~\ref{app:gp}). The observation policy also determines the contexts over which variance is measured and affects both the scale and accuracy of the score (Appendix~\ref{app:gp}). A deployment-oriented policy should therefore reflect the patterns of missing evidence expected in use. Validation against reference contextual variances is restricted to the synthetic benchmarks; PTB-XL and IMPTC evaluate representation transfer.

\section{Conclusion}

An uncertainty estimate is interpretable only if something outside the fitting objective constrains what it means. The NIG marginal leaves the allocation between data-scale and predictive-mean variance unidentified; masked representation learning supplies an observable behavioral signal by holding a target hidden while varying its visible context. Normalized by the data scale, this signal takes the position of the NIG factor $1/\nu$, which turns the unidentified evidence count into a measured count of the evidence the policy varies. MORSE learns a score grounded in this response and produces it in one forward pass per observation state. Against a contextual reference built from analytic Bayes conditional means, MORSE has the lowest log-variance error among the compared estimators in nine GP settings, while an independent training-set reference separates contextual from resampling-based variation. Because the target is defined by a stated observation policy, it can be matched to the evidence a system may lose. A quantity tied to a stated intervention can be checked; one asserted only by a loss cannot.

\section*{Acknowledgments}
This research has been partly funded by the Hessian State Ministry for Higher Education, Research, Science and the Arts (HMWK) as part of the LOEWE program (Kapitel 15 02, F\"orderbuchungskreis 2995, F\"orderprodukt 11 -- Landes-Offensive zur Entwicklung Wissenschaftlich-\"okonomischer Exzellenz).

\subsection*{AI Use Statement}

In this work, we used generative AI tools to help develop the conceptual framing of the added audits, to propose and refine their hypotheses, to design and provide feedback on the research methodology and experiments, to formulate mathematical claims, to provide ingredients for their proofs and assist in writing the proofs of Propositions~\ref{prop:population} and~\ref{prop:envelope}, to implement methods and experiment code (including the code that generates the synthetic datasets), to interpret results, and to assist with translation between Chinese and English. Cleaning or reformatting datasets and qualitative or thematic data analysis are not applicable to this work. Additionally, we used generative AI tools to draft and edit parts of the paper, to prepare and format the manuscript, to create and modify figures and tables, and to suggest the structure and title of the paper. We have reviewed all AI-assisted work. AI-assisted code was reviewed by the authors and checked with unit tests and an independent validation script covering input checksums, training manifests, and paired-anchor integrity; all reported tables and figures were regenerated from saved per-run artifacts; and all AI-assisted mathematical arguments were checked line by line by the authors. We take responsibility for the final content of this work, including text, claims, or artifacts produced with the aid of generative AI.

\bibliography{references}

\clearpage
\appendix
\section*{Appendix}
Appendix~\ref{app:formal} gives the formal results and Appendix~\ref{app:method} the complete method. The evidence follows the three questions of Section~\ref{sec:design}. Semantic agreement: the GP audits, including the nine-setting scan and the MC dropout-site study (Appendix~\ref{app:gp}), the finite-training-set audit (Appendix~\ref{app:reference}), and the pulse proxy and training-size audit (Appendix~\ref{app:pulse-audit}). Mechanism: the pulse-window results and ablations and the matched component and head study (Appendix~\ref{app:synthetic}). Transfer: the real-data protocols (Appendices~\ref{app:ptb} and~\ref{app:imptc}). Appendix~\ref{app:repro} covers reproducibility, and its final table maps each claim to its evidence. Tables and figures are numbered continuously with the main paper.

\section{Formal Results and Claim Boundaries}
\label{app:formal}

\subsection{Marginal Allocation Is Not Identified}

For one target, let a Normal-Inverse-Gamma (NIG)-shaped head output
$\mathbf m=(\gamma,\nu,\alpha,\beta)$ with $\nu>0$, $\alpha>1$, and
$\beta>0$. Its Student-$t$ marginal has location $\gamma$, scale
\begin{equation}
s^2=\frac{\beta(1+\nu)}{\nu\alpha},
\label{eq:app-student-scale}
\end{equation}
and degrees of freedom $d=2\alpha$. For $d>2$, its predictive variance is
\begin{equation}
\upred=\frac{d}{d-2}s^2
=\underbrace{\frac{\beta}{\alpha-1}}_{\udata}
+\underbrace{\frac{\beta}{\nu(\alpha-1)}}_{\umean},
\qquad \frac{\umean}{\udata}=\frac1\nu.
\label{eq:app-nig-slots}
\end{equation}

\begin{proposition}[Continuum of marginally equivalent allocations]
Fix any Student-$t$ marginal $(\gamma,s^2,d)$ with $s^2>0$ and $d>2$.
For every desired relative allocation $r>0$, there exists a valid NIG tuple
with the same marginal and $\umean/\udata=r$.
\label{prop:allocation}
\end{proposition}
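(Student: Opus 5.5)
The proof is a direct construction that inverts the Student-$t$ parameterization of Equation~\eqref{eq:app-student-scale}. Given the target marginal $(\gamma,s^2,d)$ with $s^2>0$, $d>2$, and a desired ratio $r>0$, I will set
\begin{equation*}
\nu=\frac1r,\qquad \alpha=\frac d2,\qquad \beta=\frac{\alpha s^2\nu}{1+\nu}=\frac{d\,s^2}{2(1+r)},
\end{equation*}
and keep $\gamma$ as given. This is the choice already sketched in Section~\ref{sec:theory}. The plan is to check, in order, that this tuple is admissible, that it reproduces the marginal, and that it realizes the ratio $r$.

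\emph{Validity.} The tuple is admissible because $\nu=1/r>0$, $\alpha=d/2>1$ since $d>2$, and $\beta>0$ since $d,s^2>0$ and $1+r>0$.

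\emph{Marginal preservation.} The location is $\gamma$ by construction, and the degrees of freedom are $2\alpha=d$. For the scale, I substitute into Equation~\eqref{eq:app-student-scale}:
\begin{equation*}
\frac{\beta(1+\nu)}{\nu\alpha}=\frac{\alpha s^2\nu}{1+\nu}\cdot\frac{1+\nu}{\nu\alpha}=s^2 .
\end{equation*}
Since a Student-$t$ law is fully determined by its location, scale, and degrees of freedom, the induced marginal coincides with the given one.

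\emph{Allocation.} By Equation~\eqref{eq:app-nig-slots}, the ratio is $\umean/\udata=1/\nu=r$. Both slots are finite and positive because $\alpha>1$.

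No step is a genuine obstacle. The only point needing care is the remark that the marginal depends on $(\gamma,\nu,\alpha,\beta)$ solely through $(\gamma,s^2,2\alpha)$, which is the standard NIG-to-Student-$t$ marginalization. As an optional supplement, I would note that the map $r\mapsto(\gamma,1/r,d/2,\,d s^2/(2(1+r)))$ is injective. This shows the equivalence class is a one-parameter curve, hence a genuine continuum, and so every positive allocation is attained.
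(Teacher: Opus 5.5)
Your proposal is correct and matches the paper's proof: you make the same choice $\alpha=d/2$, $\nu=1/r$, $\beta=\alpha s^2/(1+r)$ (your $\alpha s^2\nu/(1+\nu)$ is the same value), then substitute into the scale formula and read off $2\alpha=d$ and $1/\nu=r$. Your injectivity remark is a harmless addition; it makes explicit the paper's closing observation that the arbitrary choice of $r$ yields a continuum of allocations.
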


\begin{proof}
Choose
\begin{equation}
\alpha=\frac d2,\qquad \nu=\frac1r,\qquad
\beta=\frac{\alpha s^2}{1+r}.
\label{eq:app-family}
\end{equation}
All positivity conditions hold. Substitution into
Equation~\eqref{eq:app-student-scale} gives
\[
\frac{\beta(1+\nu)}{\nu\alpha}
=\frac{\alpha s^2}{1+r}
  \frac{1+1/r}{(1/r)\alpha}
=s^2,
\]
while $2\alpha=d$ and the location remains $\gamma$. Finally,
$\umean/\udata=1/\nu=r$. Because $r$ is arbitrary, a continuum of internal
allocations produces the identical marginal predictive density.
\end{proof}

Proposition~\ref{prop:allocation} concerns evidence supplied by the marginal
likelihood. A regularizer, architectural constraint, or optimization bias may
select one tuple, but that selection does not create target-specific observable
evidence that its allocation has the intended meaning. MORSE addresses this
semantic supervision gap.

\subsection{Policy-Relative Total-Variance Derivation}

Fix an input sequence $X$, a target index $t$, and the event that $t$ is
hidden. Let $M\sim\pi_t$ be an observation state sampled from a specified
masking policy conditional on this event. For every admissible $M$, let
$q_\theta(y_t\mid X,M)$ have finite first and second moments,
\begin{equation}
\begin{aligned}
\mu_\theta(X,M)&=\E[Y_t\mid X,M],\\
v_\theta(X,M)&=\Var(Y_t\mid X,M).
\end{aligned}
\end{equation}
The policy induces
\begin{equation}
q_{\theta,\pi_t}(y_t\mid X)
=\E_{M\sim\pi_t}[q_\theta(y_t\mid X,M)].
\end{equation}

\begin{proposition}[Between-context variance under a masking policy]
Under the setup above,
\begin{equation}
\begin{aligned}
\Var_{q_{\theta,\pi_t}}(Y_t\mid X)
&=\E_{M\sim\pi_t}[v_\theta(X,M)]\\
&\quad+\Var_{M\sim\pi_t}[\mu_\theta(X,M)].
\end{aligned}
\label{eq:app-total-variance}
\end{equation}
Thus
$\robs(X,t;\pi_t):=\Var_{M\sim\pi_t}[\mu_\theta(X,M)]$ is exactly the
between-context predictive-mean component of the mask-policy-induced mixture.
\label{prop:between-context}
\end{proposition}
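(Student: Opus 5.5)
The plan is to treat the policy-induced predictive $q_{\theta,\pi_t}$ as the marginal of a joint law on $(M,Y_t)$ and apply the law of total variance. Concretely, I would define the joint distribution by first drawing $M\sim\pi_t$ and then $Y_t\mid M\sim q_\theta(\cdot\mid X,M)$, with $X$ and $t$ held fixed throughout. By construction, the $Y_t$-marginal of this joint law is exactly the mixture $\E_{M\sim\pi_t}[q_\theta(y_t\mid X,M)]$. Any moment of $Y_t$ under $q_{\theta,\pi_t}$ is therefore the same moment under the joint law. The formal point to check is that the mixture is a well-defined probability measure, which requires $M\mapsto q_\theta(\cdot\mid X,M)$ to be measurable; this is trivial when the policy has finitely many admissible masks.

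Next I would compute the first two moments by the tower property. For the first moment,
\[
\E[Y_t]=\E_M\bigl[\E[Y_t\mid M]\bigr]=\E_M[\mu_\theta(X,M)].
\]
For the second moment,
\[
\E[Y_t^2]=\E_M\bigl[v_\theta(X,M)+\mu_\theta(X,M)^2\bigr].
\]
Subtracting the square of the first moment from the second gives
\[
\E_M[v_\theta]+\E_M[\mu_\theta^2]-(\E_M[\mu_\theta])^2=\E_M[v_\theta]+\Var_M[\mu_\theta],
\]
which is Equation~\eqref{eq:app-total-variance}. The identification of $\robs$ as the between-context component then holds by definition.

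The only step that needs care is integrability. Finite moments for each admissible $M$ do not by themselves guarantee that $\E_M[v_\theta]$ and $\E_M[\mu_\theta^2]$ are finite. I would handle this in one of two ways. The first option is to note that the masking policies used here have finite support, since masks are subsets of a length-$L$ window, so every expectation over $M$ is a finite sum and nothing further is needed. The second option is to state that the identity holds in $[0,\infty]$. Both sides are sums of nonnegative terms, and $\E[Y_t^2]<\infty$ holds if and only if both $\E_M[v_\theta]$ and $\E_M[\mu_\theta^2]$ are finite, so both sides are infinite together. I would adopt the finite-support remark as the main argument and note the general case briefly.
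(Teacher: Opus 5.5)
Your proof is correct and takes the paper's route. The paper builds the same joint law ($M\sim\pi_t$, then $Y_t\sim q_\theta(\cdot\mid X,M)$) and cites the law of total variance, which you derive inline from the tower property; your remarks on measurability and integrability (finite mask support, or reading the identity in $[0,\infty]$) are a valid tightening that the paper leaves implicit.
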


\begin{proof}
Introduce the joint distribution that first samples $M\sim\pi_t$ and then
$Y_t\sim q_\theta(\cdot\mid X,M)$. Conditioning on fixed $X$ and the target
event, the law of total variance gives
\[
\begin{aligned}
\Var(Y_t\mid X)
&=\E_M[\Var(Y_t\mid X,M)]\\
&\quad+\Var_M(\E[Y_t\mid X,M]).
\end{aligned}
\]
Substitution of $v_\theta$ and $\mu_\theta$ proves
Equation~\eqref{eq:app-total-variance}.
\end{proof}

The equality in Proposition~\ref{prop:between-context} is exact for the
mask-induced mixture. The connection to the NIG slot is structural:
$\robs$ occupies a between-mean position for variation over observation states,
whereas $\umean$ is the predictive-mean slot in the NIG hierarchy. $\robs$ is
defined over observation states, whereas Bayesian parameter uncertainty and
cross-training-set variation are defined over parameters and training sets
(Proposition~\ref{prop:population}).

\subsection{Finite-View Estimation and Scale Normalization}

For conditionally independent masks $M_1,\ldots,M_K\sim\pi_t$, MORSE computes
\begin{equation}
\widehat\robs
=\frac1K\sum_{k=1}^{K}(\mu_k-\bar\mu)^2.
\end{equation}
With divisor $K$,
$\E[\widehat\robs]=(K-1)\robs/K$. The estimator is downward-biased and
consistent under the stated sampling assumption. The final method fixes $K=4$
and treats both $K$ and $\pi_t$ as part of the operational definition of the
target.

Because $\widehat\robs$ is variance-valued, MORSE divides it by the mean local
data scale,
\begin{equation}
\rnorm{}_{t}
=\frac{\sg(\widehat{\robs}_{t})}
{\sg(K^{-1}\sum_k\udata{}_{k,t})+\epsilon}.
\label{eq:app-rnorm}
\end{equation}
The response is dimensionless and structurally comparable to
$\umean/\udata=1/\nu$. The comparison has a direct reading in the NIG
hierarchy, where $\mu\mid\sigma^2\sim\mathcal N(\gamma,\sigma^2/\nu)$ and
$\sigma^2\sim\mathrm{InvGamma}(\alpha,\beta)$. Then
$\Var(\mu)=\E[\sigma^2]/\nu=\udata/\nu$, so $1/\nu$ is the variance of the
mean in units of the data variance, and $\nu$ is the number of noisy
observations whose average would carry this much uncertainty about the mean.
The normalized response measures the same ratio for the variation of the mean
across observation states, which gives the effective count
$\nu_{\mathrm{eff}}=\overline{U}_{\mathrm{data}}/\robs$.
Because $\rstar=0$ whenever the optimal mean does not change across admissible
contexts, including when no admissible context informs the target,
$\nu_{\mathrm{eff}}$ counts only the evidence that the policy varies.
Proposition~\ref{prop:envelope}, under a Gaussian working assumption, gives the
constant with which the envelope alone matches the two
(Appendix~\ref{app:envelope}). Detaching numerator and denominator prevents the
reference loss from reducing itself by collapsing cross-view means or
inflating its own denominator. Repeated optimization can still produce slower
co-adaptation through the shared network, so detachment is not an independence
guarantee.

\subsection{Direct-Path Decoupling}

The final fitting surrogate is
\begin{equation}
\mathcal L_{\mathrm{fit}}
=\log(s_{\mathrm{eff}}+\epsilon)
+\frac{(y-\mu)^2}{s_{\mathrm{eff}}+\epsilon},
\end{equation}
where $s_{\mathrm{eff}}$ depends on $\udata$ and detached $\rnorm$, but not on
$\nu$. Hence
$\partial\mathcal L_{\mathrm{fit}}/\partial\nu=0$ at the output level. The
relative factor is directly trained by ObsRef and Rank only. Shared encoder
parameters still receive the sum of all representation gradients; this is
direct-path decoupling, not statistical independence of output variables.

\subsection{Population Limit of the Observability Response}
\label{app:population}

Fix $X$, a hidden target $t$, and a policy $\pi_t$. Let $\mu^\ast(X,M)$ be a population-optimal mean predictor for the observation state $M$ (for squared loss, the conditional mean of $Y_t$ given the visible context under $M$), and let $\mu_\theta(X,M)$ be a trained predictor. Assume both are square-integrable over $M\sim\pi_t$ and write $\delta_\theta=\mu_\theta-\mu^\ast$.

\begin{proposition}[Decomposition and population limit]
\label{prop:population}
Under the assumptions above,
\begin{equation}
\robs(\theta)=\Var_{M}[\mu_\theta]
=\rstar+\Var_{M}[\delta_\theta]+2\Cov_{M}(\mu^\ast,\delta_\theta),
\qquad \rstar:=\Var_{M}[\mu^\ast].
\label{eq:app-population}
\end{equation}
If $\E_{M}[\delta_{\theta_n}^2]\to0$ for a sequence of predictors $\theta_n$, then $\robs(\theta_n)\to\rstar$. Moreover, $\rstar=0$ if and only if $\mu^\ast(X,M)$ is $\pi_t$-almost surely constant in $M$, that is, if changing the visible context under the policy never changes the conditional mean of the target. For a fixed state $M_0$, the variance of $\mu_{\theta_D}(X,M_0)$ over training sets $D$ equals $\Var_D[\delta_{\theta_D}(X,M_0)]$ and tends to zero whenever $\E_D[\delta_{\theta_D}(X,M_0)^2]\to0$.
\end{proposition}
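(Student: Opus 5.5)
The plan is to handle the four claims of Proposition~\ref{prop:population} in order, using only bilinearity of variance and the Cauchy--Schwarz inequality.

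\emph{Decomposition.} Substitute $\mu_\theta=\mu^\ast+\delta_\theta$ and expand the variance over $M\sim\pi_t$ bilinearly:
\[
\Var_M[\mu^\ast+\delta_\theta]=\Var_M[\mu^\ast]+\Var_M[\delta_\theta]+2\Cov_M(\mu^\ast,\delta_\theta).
\]
This is exactly Equation~\eqref{eq:app-population}. Every term is finite because $\mu^\ast$ and $\mu_\theta$ are square-integrable, so $\delta_\theta$ is square-integrable as well. This step is purely algebraic.

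\emph{Limit.} Bound the two error terms in the decomposition:
\[
\Var_M[\delta_{\theta_n}]\le\E_M[\delta_{\theta_n}^2],
\qquad
|\Cov_M(\mu^\ast,\delta_{\theta_n})|\le\sqrt{\rstar}\,\sqrt{\Var_M[\delta_{\theta_n}]}\le\sqrt{\rstar}\,\sqrt{\E_M[\delta_{\theta_n}^2]}.
\]
The covariance bound is Cauchy--Schwarz. Both bounds tend to zero under the hypothesis $\E_M[\delta_{\theta_n}^2]\to0$, and $\rstar$ is fixed and finite, so $\robs(\theta_n)\to\rstar$. Equivalently, this is the statement that $M\mapsto\mu$ is continuous in $L^2(\pi_t)$ under the standard deviation seminorm. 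The main point of care is this step: use the covariance bound rather than assuming $\delta$ is uncorrelated with $\mu^\ast$.

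\emph{Characterization of $\rstar=0$.} A square-integrable variable has zero variance iff it equals its mean almost surely. So $\rstar=0$ iff $\mu^\ast(X,M)=\E_M[\mu^\ast]$ for $\pi_t$-almost every $M$. By the definition of $\mu^\ast$ as the conditional mean given the visible context, this is the stated condition. A remark covers the case "no context informs the target": there all $\mu^\ast(X,M)$ coincide, so the case is included.

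\emph{Training-set claim.} Fix $M_0$. Then $\mu_{\theta_D}(X,M_0)=\mu^\ast(X,M_0)+\delta_{\theta_D}(X,M_0)$, and the first term does not depend on $D$. Variance over $D$ is invariant under adding a constant, so
\[
\Var_D[\mu_{\theta_D}(X,M_0)]=\Var_D[\delta_{\theta_D}(X,M_0)]\le\E_D[\delta_{\theta_D}(X,M_0)^2]\to0.
\]
Here $D$ ranges over training sets of increasing size, implicitly indexed. No step is a real obstacle; beyond the Cauchy--Schwarz bound, the remaining care is stating the measurability and integrability assumptions over $D$.
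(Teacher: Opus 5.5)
Your proposal is correct and follows the paper's proof essentially line for line: the variance-of-a-sum identity, Cauchy--Schwarz together with $\Var_M[\delta_{\theta_n}]\le\E_M[\delta_{\theta_n}^2]$ for the limit, zero variance iff almost-sure constancy, and invariance of $\Var_D$ under the $D$-independent shift $\mu^\ast(X,M_0)$. One wording slip, which nothing depends on: the continuity you intend is that of $f\mapsto\Var_M[f]$ on $L^2(\pi_t)$, not of $M\mapsto\mu$.
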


\begin{proof}
The first identity is the variance of a sum, $\Var(a+b)=\Var(a)+\Var(b)+2\Cov(a,b)$, with $a=\mu^\ast$ and $b=\delta_\theta$. By Cauchy--Schwarz, $|\Cov_M(\mu^\ast,\delta_{\theta_n})|\leq(\rstar)^{1/2}\Var_M[\delta_{\theta_n}]^{1/2}$, and $\Var_M[\delta_{\theta_n}]\leq\E_M[\delta_{\theta_n}^2]\to0$, so both correction terms vanish. A variance is zero exactly when its argument is almost surely constant. At fixed $M_0$, $\mu^\ast(X,M_0)$ does not depend on $D$, so the variance over $D$ is that of $\delta_{\theta_D}(X,M_0)$, which is bounded by its second moment.
\end{proof}

Proposition~\ref{prop:population} explains why agreement with a training-set reference can only be a finite-sample phenomenon. In practice, the variance over training sets does not reach zero because initialization and optimization add a floor; the pulse proxies in Appendix~\ref{app:pulse-audit} give an empirical estimate of that floor. The decomposition also shows that $\robs(\theta)$ can approach $\rstar$ from below when a finite-sample predictor shrinks toward context-averaged predictions, which makes $\Cov_M(\mu^\ast,\delta_\theta)$ negative.

\subsection{What the Envelope Targets}
\label{app:envelope}

Write $z=\log T$ for the finite-view target and $u=\log U$ for the head output. For fixed $u$, the envelope loss of Equation~\eqref{eq:ref-loss} is $(z-u)_+ +\omega(u-z-m)_+$.

\begin{proposition}[Population minimizer of the envelope]
\label{prop:envelope}
Let $z$ have a continuous distribution with finite first moment, and let $\omega>0$ and $m\geq0$. Every minimizer $u^\ast$ of $g(u)=\E[(z-u)_+ +\omega(u-z-m)_+]$ satisfies $P(z>u^\ast)=\omega\,P(z<u^\ast-m)$. For $\omega=1$ and a distribution of $z$ symmetric about its median $z_{0.5}$, $u^\ast=z_{0.5}+m/2$.
\end{proposition}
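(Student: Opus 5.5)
The plan is to treat $g$ as a convex function of the scalar $u$ and read off its first-order optimality condition. Each summand $(z-u)_+$ and $\omega(u-z-m)_+$ is convex and piecewise linear in $u$ for every fixed $z$. Because $z$ has a finite first moment, both expectations are finite, so $g$ is finite and convex on $\mathbb R$.

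The first step is to compute the derivative by differentiating under the expectation. The map $u\mapsto(z-u)_+$ has derivative $-\mathbf 1\{z>u\}$ everywhere except at $u=z$. The map $u\mapsto\omega(u-z-m)_+$ has derivative $\omega\mathbf 1\{z<u-m\}$ everywhere except at $u=z+m$. Both derivatives are bounded by $\max(1,\omega)$, and the difference quotients are dominated by the same constant, so dominated convergence applies. Continuity of the law of $z$ makes the kink sets null. Hence $g$ is differentiable with
\[
g'(u)=-P(z>u)+\omega\,P(z<u-m).
\]

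The second step is the optimality condition. Since $g$ is convex and differentiable, $u^\ast$ minimizes $g$ if and only if $g'(u^\ast)=0$, which is exactly $P(z>u^\ast)=\omega P(z<u^\ast-m)$. I would add a brief remark on existence of a minimizer. As $u\to-\infty$, $g'(u)\to-1$, and as $u\to+\infty$, $g'(u)\to\omega>0$. Since $g'$ is continuous (again by continuity of the law of $z$) and nondecreasing, it has a zero. This remark is not strictly needed, because the statement only concerns properties of minimizers.

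The third step is the symmetric case with $\omega=1$. Put $u=z_{0.5}+m/2$. Then $P(z>z_{0.5}+m/2)$ equals $P(z<z_{0.5}-m/2)$ by symmetry about the median, and the latter is $P(z<u-m)$. So $g'(u)=0$ and $u$ is a minimizer.

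The main obstacle is the claim that this is the minimizer, since the statement asserts $u^\ast=z_{0.5}+m/2$ outright. If the density vanishes on an interval, $g'$ can be flat at zero and the set of minimizers is an interval. I would handle this as follows. By symmetry, $g(z_{0.5}+m/2+h)=g(z_{0.5}+m/2-h)$: reflecting $z\mapsto 2z_{0.5}-z$ swaps the two hinge terms when $\omega=1$. Hence the argmin interval is symmetric about $z_{0.5}+m/2$, and that point is always a minimizer. It is the unique minimizer whenever $z$ has positive mass near $z_{0.5}\pm m/2$. I would state that qualifier explicitly rather than assume uniqueness.
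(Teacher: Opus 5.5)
Your proposal is correct and follows the paper's proof essentially step for step: convexity of $g$, the derivative $g'(u)=-P(z>u)+\omega P(z<u-m)$, the first-order condition $g'(u^\ast)=0$, and a direct check that $z_{0.5}+m/2$ satisfies it under symmetry with $\omega=1$. Your added justifications (dominated convergence, existence of a zero of $g'$) are sound, and so is your uniqueness caveat: the paper's proof likewise shows only that $z_{0.5}+m/2$ solves the equation, and your reflection argument shows it is always a minimizer, unique exactly when $z_{0.5}+m/2$ lies in the support of $z$, so the statement's ``$u^\ast=z_{0.5}+m/2$'' should be read as existence, not uniqueness.
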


\begin{proof}
$g$ is convex, and its derivative is $g'(u)=-P(z>u)+\omega P(z<u-m)$, which is continuous for continuous $z$. Setting $g'(u^\ast)=0$ gives the first claim. Under symmetry, $P(z>z_{0.5}+m/2)=P(z<z_{0.5}-m/2)$, so $u^\ast=z_{0.5}+m/2$ solves the equation.
\end{proof}

The envelope therefore targets a quantile-type functional of the finite-view response, not its mean. Under a Gaussian working assumption for the $K=4$ view predictions, $K\widehat\robs/\robs\sim\chi^2_{K-1}$, and Monte Carlo simulation with a fixed data scale and no batch offset gives $\exp(u^\ast)\approx0.668\,\robs/\overline U_{\mathrm{data}}$, that is $\umean\approx0.668\,\robs$, for the synthetic setting $(m,\omega)=(.25,1)$, and $\umean\approx1.195\,\robs$ for the real-data setting $(m,\omega)=(1,.5)$. The calculation treats the envelope in isolation, without the fitting and ranking terms. On the GP benchmark, the median ratio of MORSE's $\umean$ to its own 128-mask response lies in $[.50,.85]$ for four of five anchors and equals .970 for the fifth, in line with this prediction. On the pulse benchmark the corresponding ratios are between 3.6 and 4.4, so the prediction does not transfer to that generator.

A mean-targeting alternative replaces the envelope with the Itakura--Saito loss $\exp(\tau-u)-(\tau-u)-1$ on the bias-corrected target $\tau=\log(K\,\widehat\robs/((K-1)(\overline U_{\mathrm{data}}+\epsilon))+\epsilon)$, applied separately in each view. Its population minimizer satisfies $\exp(u^\ast)=\E[\exp\tau]$, the conditional mean of the bias-corrected ratio. With $\overline U_{\mathrm{data}}$ held fixed and independent views, this ratio is unbiased for $\robs/(\overline U_{\mathrm{data}}+\epsilon)$ up to the added $\epsilon$; because the denominator is estimated from the same views, it is not exactly unbiased in general. In practice this variant (MORSE-IS) overestimated the model's own response on every anchor (median ratios 1.66--2.08 on GP and 5.0--7.2 on pulse) and was less accurate than the envelope against the GP Bayes reference (LogVar MAE 1.109 versus .774; paired difference $+.335$, 95\% interval $[.301,.375]$, envelope better on 5/5 anchors). We therefore retain the envelope as the method and report MORSE-IS only as a control; we have not isolated whether finite-view teacher noise, the moving self-teacher, or the interaction with conditioned fitting causes the discrepancy.

\section{Complete Method Specification}
\label{app:method}

\subsection{Observation-State Construction}

For each input, the sampler first draws a common target set $C$ and then draws
independent additional masks $A_k$ for $k=1,\ldots,K$. View $k$ masks
$M_k=C\cup A_k$. Every position in $C$ is therefore predicted in every view,
while $A_k$ changes the visible supporting context. The final policy uses four
views, 50\% total masking per view, and 25\% shared targets.

All masked positions in every $M_k$ contribute to reconstruction fitting, but
only positions in $C$ have enough matched predictions to define
$\widehat{\robs}_{t}$ and $\rnorm{}_{t}$. No single view is privileged: all $K$ NIG
outputs participate symmetrically in fitting, the reference denominator, and
supervision of the relative mean-uncertainty factor. This common-target construction is what makes
cross-view variance interpretable as variation for the same hidden quantity.

\subsection{Fitting, Envelope, and Rank Objectives}

With $h(r)=r/(1+r)$, the effective scale and fitting objective are
\begin{align}
s_{k,t}^{\mathrm{eff}}
&=\udata{}_{k,t}\left[1+\lambda_{\mathrm{cond}}h(\rnorm{}_{t})\right],\\
\mathcal L_{\mathrm{fit}}
&=\frac1K\sum_{k=1}^K\operatorname*{Avg}_{t\in M_k}
\left[
\log(s_{k,t}^{\mathrm{eff}}+\epsilon)
+\frac{(y_t-\mu_{k,t})^2}{s_{k,t}^{\mathrm{eff}}+\epsilon}
\right].
\end{align}
The bounded transform prevents a large finite-view response from making the
conditional scale arbitrarily large.

Let $T_t=\rnorm{}_{t}+f$, where $f$ is a robust nonnegative floor. MORSE uses
\begin{equation}
\ell_{\mathrm{ref}}(U,T)
=[\log T-\log U]_+
+\omega[\log U-\log T-m]_+.
\end{equation}
On the real-data tasks, $\mathcal L_{\mathrm{ref}}$ averages this loss for
$U=\urel{}_{k,t}=1/\nu_{k,t}$ across views and common targets; the synthetic
trainer applies it once per common target to the view average
$U=K^{-1}\sum_k\urel{}_{k,t}$ (Table~\ref{tab:app-hparams}). The first hinge
discourages an under-responsive factor; the weighted second hinge permits a
controlled upper margin rather than enforcing pointwise equality to a noisy
finite-$K$ target. We call this the observable-reference (ObsRef) loss.

Let $H$ and $L$ be the upper and lower quartiles of detached $\rnorm$ in a
batch, and let $z=\log\urel=-\log\nu$. The ordinal term and final objective are
\begin{align}
\mathcal L_{\mathrm{rank}}
&=\E_{i\in H,j\in L}[\rho-(z_i-z_j)]_+,\\
\mathcal L_{\method}
&=\mathcal L_{\mathrm{fit}}
+\lambda_{\mathrm{ref}}\mathcal L_{\mathrm{ref}}
+\lambda_{\mathrm{rank}}\mathcal L_{\mathrm{rank}}.
\end{align}
The envelope provides local magnitude control; Rank prevents a solution that
fits a common magnitude without preserving between-target ordering.

\subsection{Frozen Main Configuration}

Table~\ref{tab:app-hparams} records the objective settings recovered from the
executed configurations. The target floor is zero in the strict synthetic
trainer and the 0.10 quantile on the two real-data tasks. In strict-synthetic training the rank weight is ramped linearly from the first
epoch and reaches its full value after 20\% of training; the task-specific
PTB-XL and IMPTC trainers use no ramp, so the rank term is active with
$\lambda_{\mathrm{rank}}=.05$ from the first epoch.
\begin{table*}[!t]
\caption{Confirmed frozen MORSE settings. All reported main-method runs use
the decoupled objective, so $\lambda_\nu=0$.}
\label{tab:app-hparams}
\centering
\small
\begin{tabular}{lccc}
\toprule
Setting & Synthetic & PTB-XL & IMPTC \\
\midrule
Views $K$ & 4 & 4 & 4 \\
Mask rate per view / shared-target rate & .50 / .25 & .50 / .25 & .50 / .25 \\
$\lambda_{\mathrm{cond}}$ / $\lambda_\nu$ & 1 / 0 & 1 / 0 & 1 / 0 \\
$\lambda_{\mathrm{ref}}$ / $\lambda_{\mathrm{rank}}$ & .01 / .05 & .05 / .05 & .05 / .05 \\
ObsRef margin $m$ / upper weight $\omega$ & .25 / 1 & 1 / .5 & 1 / .5 \\
Target-floor quantile $f$ & 0 & .10 & .10 \\
ObsRef aggregation over views & view average & per view & per view \\
Rank margin $\rho$; high/low quantiles & .10; .25/.25 & .10; .25/.25 & .10; .25/.25 \\
Rank-weight ramp & linear, first 20\% & none & none \\
Positive-parameter / arithmetic $\epsilon$ & $10^{-4}$ / $10^{-6}$ & same & same \\
\bottomrule
\end{tabular}
\end{table*}

\section{Synthetic Protocols and Expanded Results}
\label{app:synthetic}

\subsection{Diagnostic Hierarchy}

The synthetic evaluation has three levels. Fixed one-dimensional pulses verify
the Deep Evidential Regression (DER) and Meinert loss implementations under
familiar signal/noise regions.
Parameterized pulses vary pulse and noise location across tasks. The main
length-128 windowed benchmark samples pulse and noise boundaries independently
and evaluates masked reconstruction with a latent anchor. The first two levels
are diagnostic reproductions; the windowed task supplies the controlled
mechanism and reference-distribution evidence used in the main claims.
For direct diagnostics, $x\in[0,1]$ and the fixed pulse has center .5,
half-width .025, unit amplitude, and noise standard deviations .01/.10 on
the two sides of a boundary at .5. Parameterized tasks independently sample
the pulse center from $\mathcal U(.25,.75)$, the noise boundary from
$\mathcal U(.20,.80)$, a background from $\mathcal U(-.05,.05)$, a
half-width multiplier from $\mathcal U(.55,1.45)$, and an amplitude from
$\mathcal U(.65,1.45)$. Independence is essential: the model cannot infer the
noise boundary by memorizing the pulse location.

\subsection{Direct One-Dimensional Diagnostics}

Following the main paper, we call Meinert et al.'s residual-normalized DER
objective RN-DER (their Eq.~11) and their Gaussian mean--variance fitting
objective Gaussian-MV (their Eq.~12)~\citep{meinert2023unreasonable}.
The direct models use a 1--64--64--64--4 rectified-linear-unit multilayer
perceptron (ReLU MLP), Adam with learning rate
$10^{-3}$, batch size 100, 1,000 training samples, and 600 epochs. DER uses
its residual-evidence coefficient .01; RN-DER uses normalized-residual
coefficient .01 with $p=2$; Gaussian-MV uses $\lambda_\nu=.01$. The fixed
diagnostic contains 50 independent fits of one task. The parameterized
diagnostic contains 50 independently sampled tasks.

For high-noise region $H$ and low-noise region $L$, the direct audit reports
\begin{align}
L_{1/\sqrt{\nu}}
&=\left|\log\frac{\E_H[1/\sqrt{\nu}]}{\E_L[1/\sqrt{\nu}]}\right|,\\
L_{\mathrm{rel}}
&=\left|\log\frac{\E_H[1/\nu]}{\E_L[1/\nu]}\right|,\\
L_{\mathrm{mean}}
&=\left|\log\frac{\E_H[\umean]}{\E_L[\umean]}\right|.
\end{align}
The first matches the orange standard-deviation-level quantity in the
Meinert-style visualization, the second audits the relative factor used by
MORSE, and the third audits absolute NIG mean variance. None is sufficient
alone: constant uncertainty gives zero. Error AUROC therefore asks whether
$\umean$ identifies the top 20\% squared-error tokens; selective excess is the
area between its risk--coverage curve and oracle error ordering; coverage gap
is the absolute error from nominal 90\% coverage; and Width is the average
90\% interval width. Here AUROC denotes area under the receiver operating
characteristic curve. Tables use mean squared error (MSE) and negative
log-likelihood (NLL) for predictive fitting.

\begin{figure*}[!t]
  \centering
  \includegraphics[width=.98\textwidth]{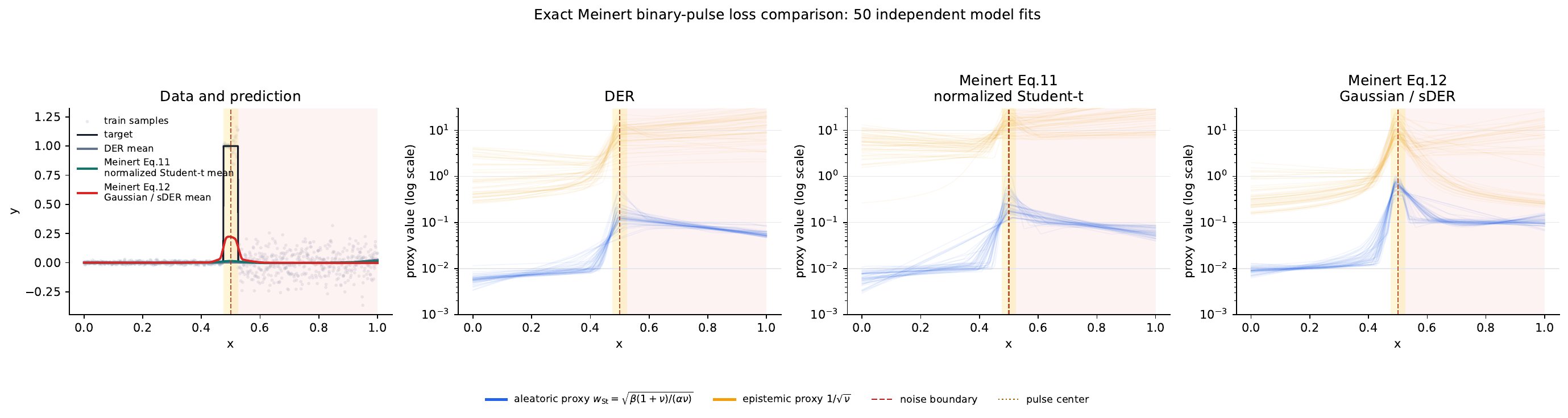}
  \caption{\textbf{Exact fixed-pulse diagnostic.} From left to right, the
  panels show the data-generating process with predictive means and 50
  independently fitted DER, RN-DER, and Gaussian-MV models.
  Blue is the predictive
  data-width proxy
  $\sqrt{\beta(1+\nu)/(\alpha\nu)}$ and orange is the relative
  mean-uncertainty width $1/\sqrt{\nu}$, matching the original diagnostic's
  plotting convention; its legend names them aleatoric and epistemic proxies. This is an implementation check; the scale-sensitive
  metrics in Table~\ref{tab:app-direct-fixed} prevent a visually flat response
  from being treated as sufficient evidence of a correct allocation. The supplied diagnostic's
  compact Eq.11/Eq.12 labels map to RN-DER/Gaussian-MV, respectively.}
  \label{fig:app-fixed}
\end{figure*}

\begin{table*}[!t]
\caption{Fixed-pulse direct diagnostic, 50 independent fits (mean $\pm$
sample standard deviation). Leakage metrics are accompanied by external utility and
interval diagnostics because a constant score can obtain low leakage.}
\label{tab:app-direct-fixed}
\centering
\small
\setlength{\tabcolsep}{4pt}
\begin{tabular}{lrrrrr}
\toprule
Method & MSE $\downarrow$ & NLL $\downarrow$ &
$L_{1/\sqrt{\nu}}\downarrow$ & $L_{\mathrm{rel}}\downarrow$ &
$L_{\mathrm{mean}}\downarrow$\\
\midrule
DER & .0498$\pm$.0008 & \textbf{$-2.0920$}$\pm$.1483 & 2.2526$\pm$1.1065 & 4.4823$\pm$2.2005 & 6.1481$\pm$2.0911\\
RN-DER & .0488$\pm$.0020 & $-2.0164\pm$.1641 & \textbf{1.3639}$\pm$.9395 & \textbf{2.7457}$\pm$1.8722 & \textbf{5.0433}$\pm$1.7440\\
Gaussian-MV & \textbf{.0362}$\pm$.0163 & $-1.9573\pm$.1258 & 1.4991$\pm$1.0410 & 3.0724$\pm$2.0066 & 7.4254$\pm$2.1064\\
\bottomrule
\end{tabular}

\medskip
\begin{tabular}{lrrrr}
\toprule
Method & Error AUROC $\uparrow$ & Sel. excess $\downarrow$ &
Cov. gap $\downarrow$ & Width $\downarrow$\\
\midrule
DER & .7563$\pm$.1594 & .0235$\pm$.0136 & .0519$\pm$.0188 & \textbf{.1753}$\pm$.0280\\
RN-DER & .7339$\pm$.2127 & .0085$\pm$.0116 & \textbf{.0497}$\pm$.0188 & .2188$\pm$.0276\\
Gaussian-MV & \textbf{.8552}$\pm$.1501 & \textbf{.0002}$\pm$.0002 & .0758$\pm$.0109 & .3263$\pm$.0484\\
\bottomrule
\end{tabular}
\end{table*}

\begin{figure*}[!t]
  \centering
  \includegraphics[width=.47\textwidth]{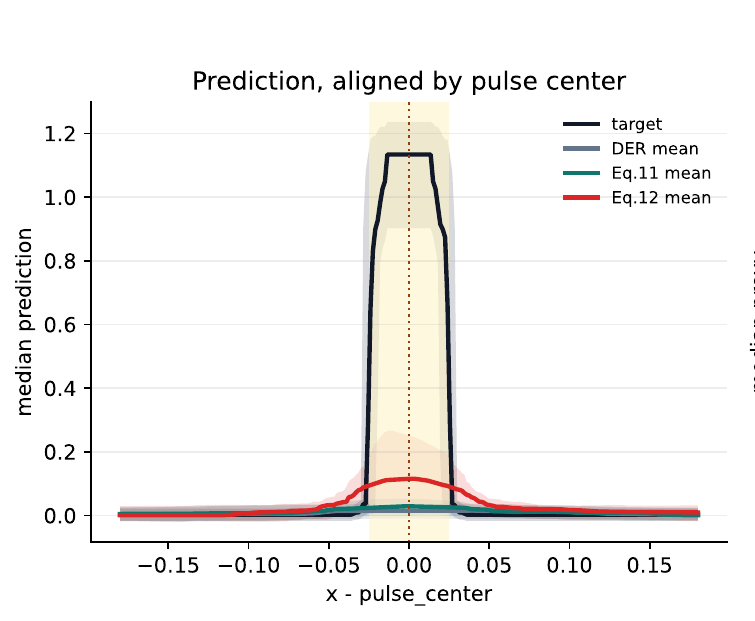}\hfill
  \includegraphics[width=.47\textwidth]{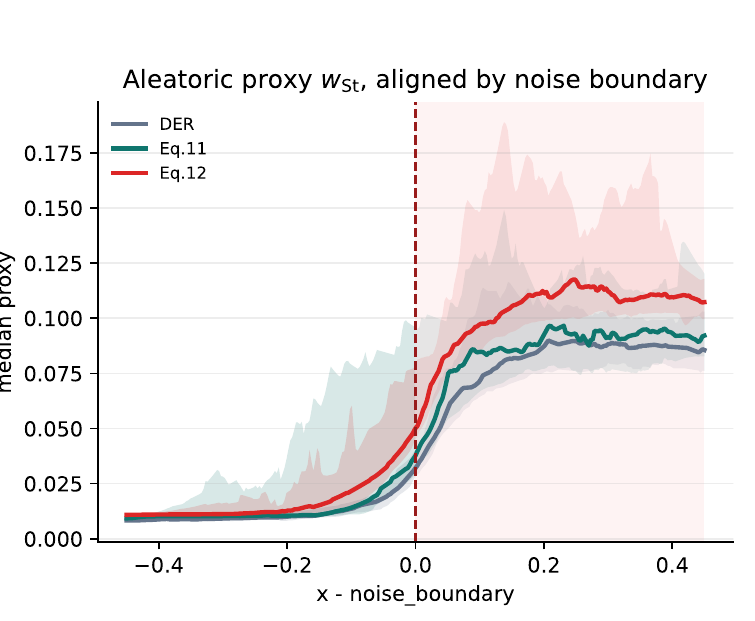}
  \par
  \includegraphics[width=.47\textwidth]{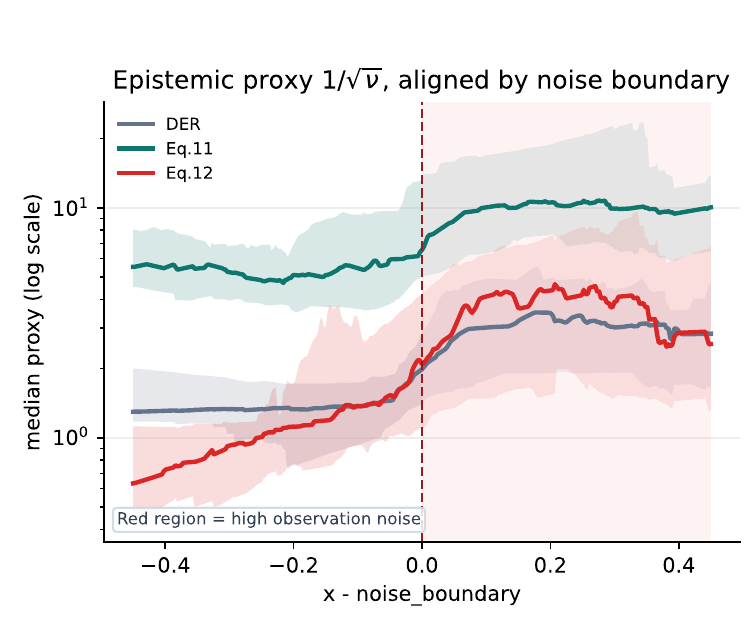}
  \caption{\textbf{Parameterized direct diagnostic.} From upper left to lower
  center: prediction aligned by latent pulse center, data-scale response
  aligned by the independently sampled noise boundary, and relative
  mean-uncertainty response aligned by that boundary. Curves show the median
  and bands the interquartile range across 50 tasks. Alignment is used only for
  visualization, not for training or metric computation. Panel titles keep the
  original diagnostic's aleatoric and epistemic proxy names. The supplied
  diagnostic's compact Eq.11/Eq.12 labels map to RN-DER/Gaussian-MV,
  respectively.}
  \label{fig:app-parameterized}
\end{figure*}

\begin{table*}[!t]
\caption{Parameterized direct diagnostic over 50 independently sampled tasks
(mean $\pm$ sample standard deviation).}
\label{tab:app-direct-parameterized}
\centering
\small
\setlength{\tabcolsep}{4pt}
\begin{tabular}{lrrrrr}
\toprule
Method & MSE $\downarrow$ & NLL $\downarrow$ &
$L_{1/\sqrt{\nu}}\downarrow$ & $L_{\mathrm{rel}}\downarrow$ &
$L_{\mathrm{mean}}\downarrow$\\
\midrule
DER & .0591$\pm$.0282 & \textbf{$-2.0797$}$\pm$.3317 & \textbf{.9234}$\pm$.7862 & \textbf{1.8467}$\pm$1.5562 & \textbf{3.2412}$\pm$1.7635\\
RN-DER & .0573$\pm$.0283 & $-1.9756\pm$.3310 & 1.0276$\pm$.8912 & 2.0126$\pm$1.7041 & 3.9321$\pm$1.7180\\
Gaussian-MV & \textbf{.0422}$\pm$.0283 & $-1.9377\pm$.3587 & 1.2887$\pm$1.0253 & 2.5494$\pm$1.9830 & 5.5316$\pm$2.7207\\
\bottomrule
\end{tabular}

\medskip
\begin{tabular}{lrrrr}
\toprule
Method & Error AUROC $\uparrow$ & Sel. excess $\downarrow$ &
Cov. gap $\downarrow$ & Width $\downarrow$\\
\midrule
DER & .7675$\pm$.2003 & .0254$\pm$.0352 & .0528$\pm$.0149 & \textbf{.1837}$\pm$.0898\\
RN-DER & .7605$\pm$.2282 & .0064$\pm$.0139 & \textbf{.0527}$\pm$.0190 & .2304$\pm$.0702\\
Gaussian-MV & \textbf{.8749}$\pm$.1297 & \textbf{.0005}$\pm$.0021 & .0774$\pm$.0130 & .3360$\pm$.0766\\
\bottomrule
\end{tabular}
\end{table*}

\subsection{Parameterized Pulse-Window Protocol}

Each sequence has 128 tokens, with 2,048/1,024 train/test windows generated
from the parameterized process above. Targets are standardized using training
statistics. All masked methods use the same two-layer Transformer ($d=64$,
four heads), AdamW with learning rate $10^{-3}$ and weight decay $10^{-4}$,
batch size 64, and 100 epochs.

\begin{figure*}[!t]
  \centering
  \includegraphics[width=.94\textwidth,trim=0 0 0 27bp,clip]{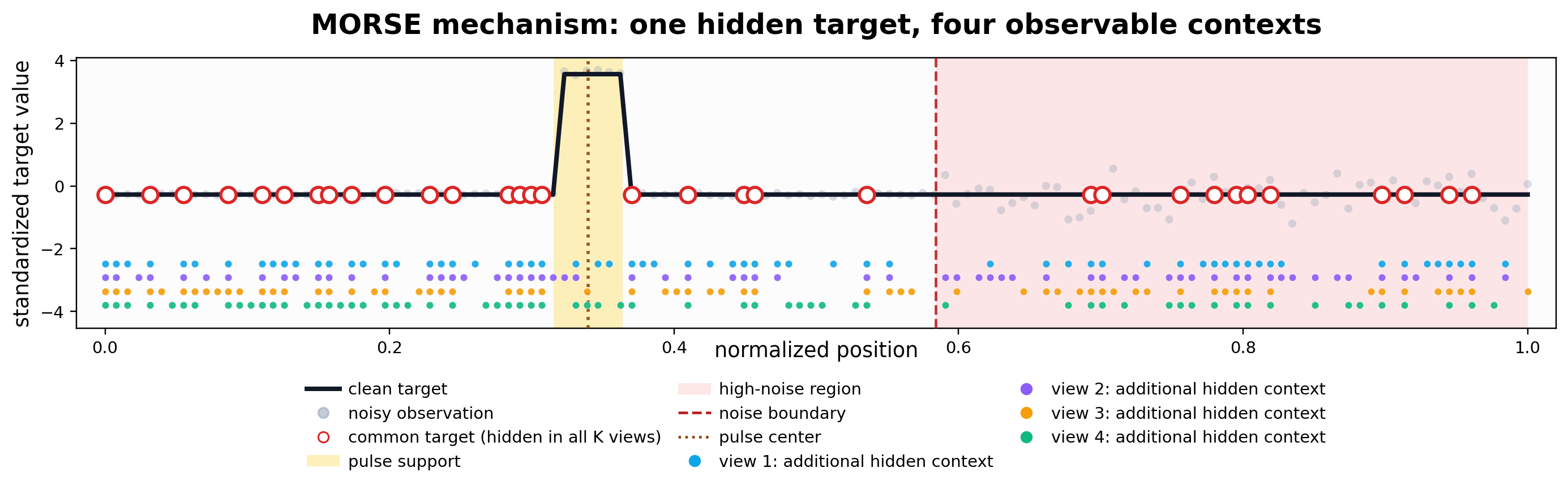}
  \caption{\textbf{Common-target observation policy.} One sequence is shown,
  with the legend below. Red circles mark the common targets $C$, hidden in all
  four views; the four rows of dots beneath the signal mark each view's
  additional hidden context $A_k$. The pulse and noise boundary vary
  independently. Every cross-view prediction therefore refers to the same
  target while its supporting evidence changes.}
  \label{fig:app-observation-policy}
\end{figure*}

\subsection{Full Ten-Seed Backbone Comparison}

$L_{\mathrm{rel}}$ measures undesired sensitivity of the relative factor to
the heteroscedastic noise mechanism. $L_{\mathrm{mean}}$ is the absolute
log-ratio between average $\umean$ in high- and low-noise regions after pooling
tokens within each seed. A constant score can make the latter small, so it must
be interpreted with coverage, interval width, error AUROC, and selective
excess.

\begin{table*}[!t]
\caption{Parameterized pulse-window backbone comparison over ten paired seeds
(mean $\pm$ sample standard deviation).}
\label{tab:app-backbone-full}
\centering
\setlength{\tabcolsep}{3.0pt}
\small
\resizebox{\textwidth}{!}{%
\begin{tabular}{lccccccc}
\toprule
Method & MSE $\downarrow$ & NLL $\downarrow$ & $L_{\mathrm{rel}}$ $\downarrow$ & $L_{\mathrm{mean}}$ $\downarrow$ & Err. AUROC $\uparrow$ & Sel. excess $\downarrow$ & Cov. gap $\downarrow$ \\
\midrule
Masked DER
& .6188$\pm$.1811 & $-.2508\pm$.0516 & 1.4947$\pm$.2270 & .5562$\pm$.4302 & .8271$\pm$.0228 & .0895$\pm$.0470 & .0460$\pm$.0166 \\
RN-DER
& \textbf{.5151}$\pm$.0588 & \textbf{$-.2565$}$\pm$.0268 & 1.1061$\pm$.2679 & .3149$\pm$.4609 & .8319$\pm$.0087 & .0360$\pm$.0040 & .0512$\pm$.0085 \\
Gaussian-MV
& .5416$\pm$.0423 & $-.0996\pm$.0499 & .5343$\pm$.2257 & .5772$\pm$.1008 & \textbf{.8785}$\pm$.0168 & \textbf{.0289}$\pm$.0025 & .0934$\pm$.0020 \\
\textbf{MORSE}
& .5331$\pm$.0371 & $-.1681\pm$.0298 & \textbf{.3424}$\pm$.0906 & \textbf{.1253}$\pm$.1092 & .8705$\pm$.0070 & .0295$\pm$.0034 & \textbf{.0176}$\pm$.0086 \\
\bottomrule
\end{tabular}}
\end{table*}

RN-DER remains strongest for reconstruction MSE and evaluation NLL, while
Gaussian-MV gives the highest error AUROC and smallest selective excess. MORSE gives
the smallest leakage diagnostics and coverage gap while remaining close on
selective utility. These complementary outcomes motivate treating MORSE as a
balanced method rather than counting metric wins.

\subsection{Full Ten-Seed Core Ablations}

\begin{table*}[!t]
  \caption{Parameterized pulse-window results (ten-seed mean $\pm$ sample standard deviation). MORSE-RC restores the direct residual-to-$\nu$ gradient. $\Lmean$ measures pooled high/low-noise sensitivity, and Width$^\dagger$ is the mean 90\% moment-interval width, to be read jointly with CovGap. Bold marks the best mean within each block.}
  \label{tab:mechanism}
  \centering
  \small
  \setlength{\tabcolsep}{4pt}
  \resizebox{\textwidth}{!}{%
\begin{tabular}{lrrrrrrr}
  \toprule
  & \multicolumn{2}{c}{Predictive fit} & \multicolumn{1}{c}{Mean uncertainty} & \multicolumn{2}{c}{Error ranking} & \multicolumn{2}{c}{Intervals} \\
  \cmidrule(lr){2-3}\cmidrule(lr){4-4}\cmidrule(lr){5-6}\cmidrule(lr){7-8}
  Method & MSE $\downarrow$ & NLL $\downarrow$ & $\Lmean$ $\downarrow$ & AUROC $\uparrow$ & SelEx $\downarrow$ & CovGap $\downarrow$ & Width$^\dagger$ \\
  \midrule
  \multicolumn{8}{l}{\emph{Model comparison}} \\
  Masked DER & $.619\pm.181$ & $-.251\pm.052$ & $.556\pm.430$ & $.827\pm.023$ & $.0895\pm.0470$ & $.0460\pm.0166$ & $18.93\pm35.84$ \\
  RN-DER & $\mathbf{.515\pm.059}$ & $\mathbf{-.256\pm.027}$ & $.315\pm.461$ & $.832\pm.009$ & $.0360\pm.0040$ & $.0512\pm.0085$ & $2.36\pm1.44$ \\
  Gaussian-MV & $.542\pm.042$ & $-.100\pm.050$ & $.577\pm.101$ & $\mathbf{.879\pm.017}$ & $\mathbf{.0289\pm.0025}$ & $.0934\pm.0020$ & $2.79\pm.30$ \\
  \textbf{MORSE} & $.533\pm.037$ & $-.168\pm.030$ & $\mathbf{.125\pm.109}$ & $.870\pm.007$ & $.0295\pm.0034$ & $\mathbf{.0176\pm.0086}$ & $\mathbf{1.45\pm.07}$ \\
  \midrule
  \multicolumn{8}{l}{\emph{Component and coupling ablations}} \\
  MORSE w/o ObsRef & $.544\pm.041$ & $1.418\pm.117$ & $.253\pm.123$ & $.866\pm.009$ & $.0364\pm.0040$ & $.1000\pm.0001$ & $14.89\pm1.96$ \\
  MORSE w/o Rank & $.541\pm.045$ & $-.106\pm.034$ & $.319\pm.160$ & $.874\pm.012$ & $.0292\pm.0032$ & $.0768\pm.0040$ & $2.00\pm.14$ \\
  MORSE-RC & $\mathbf{.533\pm.020}$ & $-.103\pm.035$ & $.255\pm.099$ & $\mathbf{.875\pm.009}$ & $\mathbf{.0281\pm.0021}$ & $.0761\pm.0044$ & $2.05\pm.10$ \\
  \textbf{MORSE} & $\mathbf{.533\pm.037}$ & $\mathbf{-.168\pm.030}$ & $\mathbf{.125\pm.109}$ & $.870\pm.007$ & $.0295\pm.0034$ & $\mathbf{.0176\pm.0086}$ & $\mathbf{1.45\pm.07}$ \\
  \bottomrule
  \end{tabular}}
\end{table*}

Table~\ref{tab:app-ablation-full} reports full ten-seed statistics for the
three core variants available in the frozen snapshot. Removing ObsRef destroys
scale control, while removing Rank retains useful error ordering but weakens
relative-factor separation and coverage.

\begin{table*}[!t]
\caption{Core MORSE ablations over ten paired seeds (mean $\pm$ sample standard
deviation; the same component cohort as Table~\ref{tab:mechanism}).}
\label{tab:app-ablation-full}
\centering
\setlength{\tabcolsep}{3.0pt}
\small
\resizebox{\textwidth}{!}{%
\begin{tabular}{lccccccc}
\toprule
Variant & MSE $\downarrow$ & NLL $\downarrow$ & $L_{\mathrm{rel}}$ $\downarrow$ & $L_{\mathrm{mean}}$ $\downarrow$ & Err. AUROC $\uparrow$ & Sel. excess $\downarrow$ & Cov. gap $\downarrow$ \\
\midrule
\textbf{MORSE}
& \textbf{.5331}$\pm$.0371 & \textbf{$-.1681$}$\pm$.0298 & .3424$\pm$.0906 & \textbf{.1253}$\pm$.1092 & .8705$\pm$.0070 & .0295$\pm$.0034 & \textbf{.0176}$\pm$.0086 \\
MORSE without ObsRef
& .5440$\pm$.0408 & 1.4182$\pm$.1172 & .8044$\pm$.1085 & .2531$\pm$.1233 & .8664$\pm$.0095 & .0364$\pm$.0040 & .1000$\pm$.0001 \\
MORSE without Rank
& .5411$\pm$.0450 & $-.1059\pm$.0338 & \textbf{.0114}$\pm$.0050 & .3191$\pm$.1597 & \textbf{.8737}$\pm$.0117 & \textbf{.0292}$\pm$.0032 & .0768$\pm$.0040 \\
\bottomrule
\end{tabular}}
\end{table*}

The two additional controls are reported in
Table~\ref{tab:app-extended-means}. The $\udata$-base control removes
observability conditioning from fitting while retaining supervision of the
relative factor. The Condition-only control retains conditioning but removes the explicit
ObsRef and Rank objectives. Their failures separate reconstruction conditioning
from semantic supervision.
\begin{table*}[!t]
\caption{Extended mechanism controls over ten paired seeds (mean $\pm$
sample standard deviation; the same component cohort as Table~\ref{tab:mechanism}).}
\label{tab:app-extended-means}
\centering
\small
\setlength{\tabcolsep}{2.2pt}
\resizebox{\textwidth}{!}{%
\begin{tabular}{lrrrrrrrr}
\toprule
Variant & MSE & NLL & $L_{\mathrm{rel}}$ & $L_{\mathrm{mean}}$ & AUROC & Sel. & Cov. gap & Width \\
\midrule
$\udata$ base
& .5474$\pm$.0266 & $-.1098\pm$.0406 & .1650$\pm$.0167 & .2632$\pm$.0519 & .8732$\pm$.0081 & .0293$\pm$.0027 & .0779$\pm$.0041 & 2.0594$\pm$.1027 \\
Condition only
& .5384$\pm$.0393 & 2.5312$\pm$.1932 & .4731$\pm$.3790 & .8549$\pm$.3924 & .8360$\pm$.0171 & .0729$\pm$.0319 & .1000$\pm$.0000 & 39.4471$\pm$10.3392 \\
\bottomrule
\end{tabular}}
\end{table*}

A separate 20-seed paired comparison evaluates MORSE with residual coupling
(MORSE-RC), which adds a direct residual-to-$\nu$ path. MORSE-RC improves MSE slightly and has
similar ranking, but it worsens NLL and sharply worsens coverage and interval
width. Accordingly, the residual-coupled variant provides no consistent external gain
that justifies a second direct teacher. Table~\ref{tab:app-rc} reports the
external metrics used for this choice; teacher-adjacent alignment metrics are
not used as the selection criterion.

\begin{table*}[!t]
\caption{MORSE versus residual-coupled MORSE-RC over 20 paired seeds
(mean $\pm$ sample standard deviation). Lower is better except Error AUROC.}
\label{tab:app-rc}
\centering
\small
\setlength{\tabcolsep}{3.5pt}
\begin{tabular}{lrr}
\toprule
Metric & MORSE-RC & MORSE \\
\midrule
MSE & .5320$\pm$.0198 & .5618$\pm$.0787 \\
NLL & $-.1110\pm$.0376 & $-.1519\pm$.0509 \\
Error AUROC & .8721$\pm$.0082 & .8692$\pm$.0125 \\
Selective excess & .0281$\pm$.0018 & .0302$\pm$.0044 \\
Coverage gap & .0762$\pm$.0035 & .0188$\pm$.0100 \\
Interval width & 2.0153$\pm$.1033 & 1.4804$\pm$.0708 \\
\bottomrule
\end{tabular}
\end{table*}

\subsection{Number of Views}

Figure~\ref{fig:app-kview} and Table~\ref{tab:app-kview} vary the number of
common-target views over ten paired seeds. More views lower reconstruction
error but do not improve error ranking, while training time grows almost
linearly; the final method fixes $K=4$.

\begin{figure*}[!t]
  \centering
  \includegraphics[width=.92\textwidth]{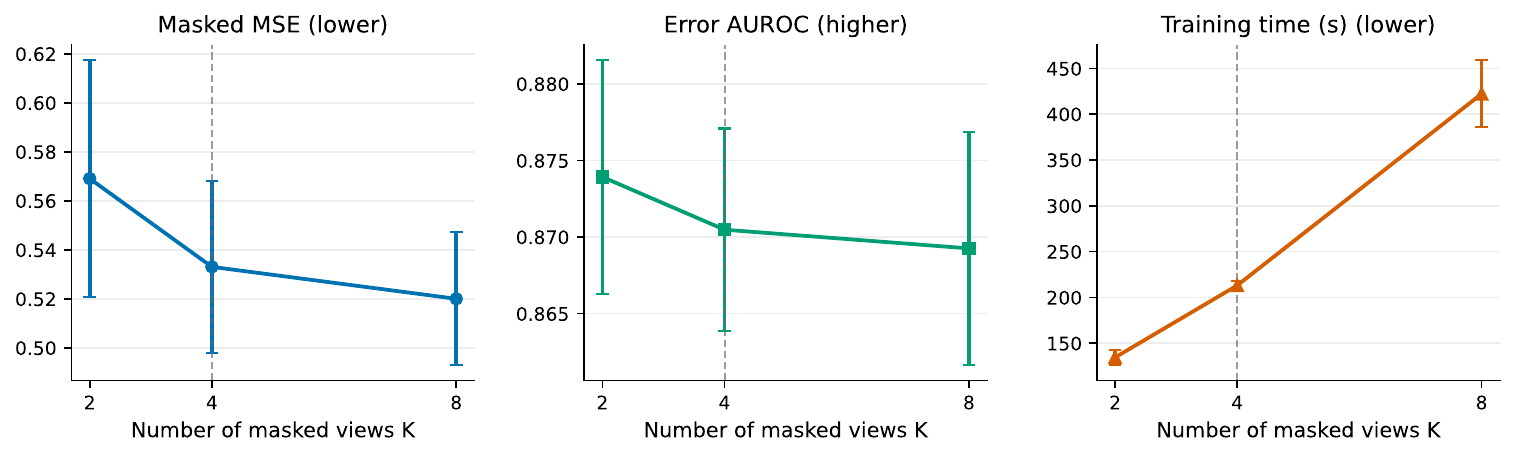}
  \caption{\textbf{Quality--cost trade-off for common-target views.} Every
  point and its error bar show mean $\pm$ standard deviation (divisor $n$) over ten paired
  seeds.
  Increasing $K$ reduces reconstruction error but does not improve error
  ranking, while cost grows almost linearly; $K=4$ is the prespecified
  compromise used throughout the paper.}
  \label{fig:app-kview}
\end{figure*}

\begin{table*}[!t]
\caption{Common-target view ablation over ten paired seeds (mean $\pm$ sample
standard deviation). Training time is
total wall-clock seconds per run. Forward passes per step include one
reference pass.}
\label{tab:app-kview}
\centering
\small
\setlength{\tabcolsep}{5pt}
\begin{tabular}{rrrrrr}
\toprule
$K$ & MSE & NLL & Error AUROC & train s/run & passes \\
\midrule
2 & .5691$\pm$.0510 & $-.1325\pm$.0246 & .8739$\pm$.0080 & 134.5$\pm$8.9 & 3 \\
4 & .5331$\pm$.0371 & $-.1681\pm$.0298 & .8705$\pm$.0070 & 213.1$\pm$4.8 & 5 \\
8 & .5200$\pm$.0286 & $-.1641\pm$.0588 & .8693$\pm$.0080 & 422.2$\pm$38.7 & 9 \\
\bottomrule
\end{tabular}
\end{table*}

\subsection{Structured Head Control}

To test whether the NIG-shaped factorization is merely redundant notation, an
explicit-head control predicts the mean, a positive data scale, and a positive
relative mean-uncertainty channel directly. It improves MSE slightly but weakens
error ranking and selective prediction. On this benchmark the two
parameterizations therefore trade reconstruction error against error ranking;
the comparison does not show that the NIG-shaped factorization is necessary,
and $\alpha$ and $\beta$ are not individually identified posterior parameters. Appendix~\ref{app:components} repeats the head comparison in a matched factorial on GP and pulse.

\begin{table*}[!t]
\caption{NIG-shaped MORSE versus explicit independent heads over ten paired
seeds.}
\label{tab:app-explicit}
\centering
\small
\setlength{\tabcolsep}{5pt}
\begin{tabular}{lrrrrrr}
\toprule
Head & MSE & NLL & Err. AUROC & Sel. & Cov. gap & Width\\
\midrule
NIG-shaped & .5331 & $-.1681$ & \textbf{.8705} & \textbf{.0295} & .0176 & 1.4485\\
Explicit & \textbf{.5140} & $-.1509$ & .8543 & .0402 & \textbf{.0094} & \textbf{1.4328}\\
\bottomrule
\end{tabular}
\end{table*}

\subsection{Finite-Training-Set Reference Audit}
\label{app:reference}

This subsection reports the original finite-training-set audit of the four
evidential methods; the extended comparison with ensembles is in
Appendix~\ref{app:pulse-audit}.

The strict reference follows the cross-training-set logic of
\citet{jurgens2024faithfully}. Fifty Gaussian first-order reconstruction models
are trained on independently sampled window datasets. On one fixed test and
mask bank, their predictive means define an empirical distribution with center
and variance $V_{\mathrm{ref}}$. The second-order methods are then trained on
five paired anchor datasets that are disjoint from the reference-model
datasets. MORSE never observes $V_{\mathrm{ref}}$ during training.

Each first-order learner uses the same two-layer Transformer hypothesis class
and 100-epoch optimization budget as the window methods, but outputs only a
reconstruction mean and one positive scalar variance per token and minimizes
Gaussian NLL. The same 50-model reference bank, deterministic test bank, and
fixed $K=4$ evaluation-mask bank are reused for all five anchors. Within anchor
$s$, DER, RN-DER, Gaussian-MV, and MORSE receive the identical serialized training
set. Saved content hashes verify this pairing, and reference-data seeds are
disjoint from anchor-data seeds. Only the anchor dataset changes across the
five paired repetitions.

Center root mean squared error (RMSE) compares a second-order predicted center
with the empirical reference center. Log-variance mean absolute error (MAE)
compares $\log\umean$ with
$\log V_{\mathrm{ref}}$. Wasserstein-1 compares the parametric predictive-mean
distribution with the empirical first-order model means. Coverage and width
refer to intervals over model means, not noisy observations. Clean-error AUROC
and area under the risk--coverage curve (AURC) are training-independent utility
checks, whereas Spearman correlation tests ordering without validating
magnitude.

\begin{table*}[!t]
\caption{Strict J\"urgens-aligned reference audit over five paired anchor
datasets (mean $\pm$ sample standard deviation). The empirical target is fixed and
independent: 50 Gaussian first-order models trained on resampled datasets.
Width ratio is ideal at one.}
\label{tab:app-reference-full}
\centering
\small
\setlength{\tabcolsep}{4pt}
\begin{tabular}{lrrrrr}
\toprule
Method & Center & Log-var & W1 & Cov. gap & Width\\
\midrule
DER & .2562$\pm$.0146 & 3.8535$\pm$.0897 & .3757$\pm$.0333 &
.0323$\pm$.0023 & 6.1212$\pm$.5051\\
RN-DER & .2307$\pm$.0132 & 4.2056$\pm$.1369 & .4418$\pm$.0269 &
.0471$\pm$.0012 & 8.3378$\pm$.5677\\
Gaussian-MV & .2354$\pm$.0814 & 4.6828$\pm$.1083 & .5190$\pm$.0526 &
.0480$\pm$.0008 & 10.5144$\pm$1.4901\\
\textbf{MORSE} & \textbf{.1584}$\pm$.0182 & \textbf{1.5496}$\pm$.0513 &
\textbf{.0785}$\pm$.0086 & .1203$\pm$.1085 &
\textbf{1.3480}$\pm$.1253\\
\bottomrule
\end{tabular}

\medskip
\begin{tabular}{lrrrr}
\toprule
Method & Noisy AUROC & Clean AUROC & AURC & Spearman\\
\midrule
DER & .8445$\pm$.0032 & .8469$\pm$.0240 & .0552$\pm$.0022 &
.8052$\pm$.0119\\
RN-DER & .8349$\pm$.0136 & .8699$\pm$.0352 & .0522$\pm$.0034 &
\textbf{.8606}$\pm$.0063\\
Gaussian-MV & \textbf{.8772}$\pm$.0069 & .9265$\pm$.0072 &
.0556$\pm$.0059 & .8309$\pm$.0096\\
\textbf{MORSE} & .8610$\pm$.0040 & \textbf{.9327}$\pm$.0345 &
\textbf{.0522}$\pm$.0015 & .8548$\pm$.0114\\
\bottomrule
\end{tabular}
\end{table*}

\begin{table*}[!t]
\caption{Paired MORSE improvement over each baseline across the five anchor
datasets. Intervals are paired 95\% bootstrap confidence intervals (CIs).
Positive differences
favor MORSE for all three lower-is-better metrics. The smallest attainable
two-sided exact sign-test $p$ value with five consistently signed pairs
is $1/16=.0625$; we therefore emphasize intervals and win counts.}
\label{tab:app-reference-paired}
\centering
\small
\setlength{\tabcolsep}{4pt}
\begin{tabular}{lccc}
\toprule
Baseline & Center benefit [95\% CI] & Log-var benefit [95\% CI] &
W1 benefit [95\% CI]\\
\midrule
DER & .0978 [.0776,.1186] & 2.3039 [2.1914,2.3870] &
.2972 [.2631,.3273]\\
RN-DER & .0723 [.0548,.0898] & 2.6560 [2.5830,2.7289] &
.3632 [.3473,.3794]\\
Gaussian-MV & .0770 [.0011,.1515] & 3.1332 [3.0243,3.2521] &
.4405 [.4047,.4840]\\
\bottomrule
\end{tabular}
\end{table*}

MORSE wins log-variance error and W1 on all five anchors against every baseline, and
Center RMSE on all five anchors against DER and RN-DER and four of five against
Gaussian-MV. DER retains the smallest nominal coverage gap, while MORSE has the
largest and most variable coverage gap despite its near-reference width ratio.
RN-DER retains the highest Spearman correlation. These exceptions are reported
because internal ordering, interval width, and nominal coverage do not validate
one another.

The finite-training-set audit across training sizes is reported with the
proxy-based training-size audit in Appendix~\ref{app:pulse-audit}.

\subsection{Matched Component and Head Study}
\label{app:components}

\paragraph{Design.}
We cross the three components, Condition (C), Bound/ObsRef (B), and Order/Rank (O), in a full $2\times2\times2$ factorial with two heads: the four-output NIG head and a three-output explicit head that predicts the data scale and the relative factor directly. Four further controls use all components: $\alpha$ fixed to 2, a head that outputs the predictive-mean variance directly, an unnormalized teacher that supervises $\urel$ with the raw cross-view variance instead of $\rnorm$, and residual coupling (MORSE-RC). Each of the 20 variants is trained on five GP anchors and ten pulse seeds, 300 fits in total. This GP benchmark fixes the noise boundary at .5, so the oracle and the learner face the same noise information, and its contextual reference uses the 124 recorded masks that remain after removing the four scoring contexts; its MORSE error (.7349) is therefore not comparable with Table~\ref{tab:gp-bayes}. The pulse cohort redraws training masks every epoch and normalizes on training data only. For every head, NLL uses the Gaussian variance $\udata+\umean$. The primary endpoints are GP contextual LogVar MAE and pulse NLL, with six declared comparisons per domain and Holm correction. The settings were fixed after earlier results were available, and a single model initializer is used.

\begin{table*}[!htbp]
\caption{Matched component and head study on the fixed-boundary GP benchmark (five paired anchors; mean $\pm$ sample SD). C, B, and O switch Condition, Bound (ObsRef), and Order (Rank) on or off. LogVar and Spearman compare the four-context score with the 124-mask oracle response; NLL uses the Gaussian variance $\udata+\umean$ for every head. Bold marks the best mean; width is not ranked.}
\label{tab:components-gp}
\centering
\resizebox{\textwidth}{!}{%
\begin{tabular}{lrrrrrr}
\toprule
Variant & LogVar & Spearman & Center RMSE & NLL & CovGap & Width \\
 & $\downarrow$ & $\uparrow$ & $\downarrow$ & $\downarrow$ & $\downarrow$ &  \\
\midrule
NIG C0B0O0 & 2.0910$\pm$.1165 & .5932$\pm$.0074 & .1395$\pm$.0018 & .1702$\pm$.0214 & .0806$\pm$.0031 & 1.4128$\pm$.0485 \\
NIG C0B0O1 & 2.0641$\pm$.0798 & .6246$\pm$.0041 & .1416$\pm$.0047 & .1701$\pm$.0159 & .0809$\pm$.0022 & 1.3938$\pm$.0273 \\
NIG C0B1O0 & .7135$\pm$.0081 & .5878$\pm$.0070 & .1425$\pm$.0056 & .0601$\pm$.0090 & .0291$\pm$.0045 & 1.0448$\pm$.0168 \\
NIG C0B1O1 & .7313$\pm$.0118 & .6215$\pm$.0034 & .1408$\pm$.0061 & .0595$\pm$.0107 & .0348$\pm$.0042 & 1.0669$\pm$.0191 \\
NIG C1B0O0 & 1.9900$\pm$.1392 & .5933$\pm$.0065 & .1434$\pm$.0049 & .1497$\pm$.0302 & .0747$\pm$.0046 & 1.3445$\pm$.0588 \\
NIG C1B0O1 & 1.9544$\pm$.1345 & .6247$\pm$.0029 & .1382$\pm$.0015 & .1396$\pm$.0167 & .0744$\pm$.0063 & 1.3342$\pm$.0599 \\
NIG C1B1O0 & \textbf{.7097}$\pm$.0063 & .5873$\pm$.0063 & .1388$\pm$.0077 & \textbf{.0437}$\pm$.0125 & .0221$\pm$.0010 & 1.0068$\pm$.0138 \\
\textbf{NIG C1B1O1 (MORSE)} & .7349$\pm$.0120 & .6215$\pm$.0025 & \textbf{.1379}$\pm$.0029 & .0464$\pm$.0065 & .0232$\pm$.0056 & 1.0198$\pm$.0191 \\
\midrule
Explicit C0B0O0 & 2.1103$\pm$.0823 & .5865$\pm$.0093 & .1459$\pm$.0039 & .1834$\pm$.0243 & .0821$\pm$.0025 & 1.4472$\pm$.0390 \\
Explicit C0B0O1 & 2.3762$\pm$.0375 & .6193$\pm$.0035 & .1444$\pm$.0020 & .2240$\pm$.0138 & .0873$\pm$.0011 & 1.5100$\pm$.0233 \\
Explicit C0B1O0 & .7197$\pm$.0110 & .5600$\pm$.0210 & .1410$\pm$.0056 & .0693$\pm$.0179 & .0303$\pm$.0012 & 1.0506$\pm$.0110 \\
Explicit C0B1O1 & .7157$\pm$.0114 & .6204$\pm$.0033 & .1390$\pm$.0057 & .0633$\pm$.0137 & .0351$\pm$.0041 & 1.0655$\pm$.0259 \\
Explicit C1B0O0 & 1.9857$\pm$.0693 & .5894$\pm$.0160 & .1430$\pm$.0038 & .1487$\pm$.0207 & .0753$\pm$.0031 & 1.3677$\pm$.0301 \\
Explicit C1B0O1 & 2.2765$\pm$.0595 & .6198$\pm$.0042 & .1424$\pm$.0036 & .1939$\pm$.0176 & .0838$\pm$.0022 & 1.4450$\pm$.0271 \\
Explicit C1B1O0 & .7196$\pm$.0093 & .5688$\pm$.0156 & .1393$\pm$.0050 & .0585$\pm$.0143 & .0269$\pm$.0052 & 1.0316$\pm$.0162 \\
Explicit C1B1O1 & .7249$\pm$.0105 & .6199$\pm$.0026 & .1419$\pm$.0011 & .0654$\pm$.0075 & .0282$\pm$.0036 & 1.0442$\pm$.0158 \\
\midrule
Fixed $\alpha=2$ & .7308$\pm$.0123 & .6246$\pm$.0057 & .1421$\pm$.0051 & .0549$\pm$.0040 & .0227$\pm$.0050 & 1.0238$\pm$.0205 \\
Direct mean variance & .7311$\pm$.0104 & \textbf{.6272}$\pm$.0039 & .1492$\pm$.0032 & .0676$\pm$.0045 & .0216$\pm$.0076 & 1.0311$\pm$.0283 \\
Raw teacher & 1.6972$\pm$.0390 & .6192$\pm$.0037 & .1383$\pm$.0030 & .0445$\pm$.0082 & \textbf{.0165}$\pm$.0053 & .9929$\pm$.0170 \\
MORSE-RC & 2.0748$\pm$.0337 & .6223$\pm$.0055 & .1386$\pm$.0057 & .1550$\pm$.0099 & .0785$\pm$.0020 & 1.3660$\pm$.0143 \\
\bottomrule
\end{tabular}}
\end{table*}

\begin{table*}[!htbp]
\caption{Matched component and head study on the pulse benchmark (ten seeds; mean $\pm$ sample SD). This cohort is not mixed with Tables~\ref{tab:app-backbone-full}--\ref{tab:app-extended-means}. NLL uses the Gaussian variance $\udata+\umean$ for every head. Bold marks the best mean; width is not ranked.}
\label{tab:components-pulse}
\centering
\resizebox{\textwidth}{!}{%
\begin{tabular}{lrrrrrrr}
\toprule
Variant & MSE & NLL & AUROC & AURC & CovGap & Width & $L_{\mathrm{mean}}$ \\
 & $\downarrow$ & $\downarrow$ & $\uparrow$ & $\downarrow$ & $\downarrow$ &  & $\downarrow$ \\
\midrule
NIG C0B0O0 & .5137$\pm$.0326 & $-.0565\pm$.0714 & .8715$\pm$.0074 & .0543$\pm$.0030 & .0694$\pm$.0062 & 2.4342$\pm$.1679 & .1608$\pm$.0765 \\
NIG C0B0O1 & .5183$\pm$.0366 & $-.0538\pm$.0416 & \textbf{.8792}$\pm$.0079 & .0553$\pm$.0043 & .0803$\pm$.0050 & 2.1985$\pm$.2129 & .2314$\pm$.1294 \\
NIG C0B1O0 & .5115$\pm$.0294 & $-.1705\pm$.0419 & .8748$\pm$.0114 & .0534$\pm$.0030 & .0149$\pm$.0090 & 1.4983$\pm$.0932 & .2773$\pm$.1708 \\
NIG C0B1O1 & .5143$\pm$.0242 & $-.1550\pm$.0664 & .8758$\pm$.0064 & .0542$\pm$.0025 & .0191$\pm$.0080 & 1.4646$\pm$.0627 & .1402$\pm$.0827 \\
NIG C1B0O0 & .5265$\pm$.0181 & $-.0571\pm$.0445 & .8761$\pm$.0088 & .0557$\pm$.0026 & .0707$\pm$.0069 & 2.5693$\pm$.3079 & .1911$\pm$.1393 \\
NIG C1B0O1 & .5030$\pm$.0256 & $-.0831\pm$.0352 & .8731$\pm$.0057 & .0532$\pm$.0022 & .0798$\pm$.0053 & 2.0578$\pm$.1394 & .2815$\pm$.1012 \\
NIG C1B1O0 & .4961$\pm$.0358 & $-.1735\pm$.0309 & .8733$\pm$.0071 & .0530$\pm$.0029 & .0126$\pm$.0080 & 1.4676$\pm$.0968 & .2508$\pm$.1637 \\
\textbf{NIG C1B1O1 (MORSE)} & .5101$\pm$.0323 & $-.1837\pm$.0210 & .8722$\pm$.0105 & .0537$\pm$.0032 & .0109$\pm$.0072 & 1.4317$\pm$.0599 & \textbf{.1086}$\pm$.1071 \\
\midrule
Explicit C0B0O0 & \textbf{.4882}$\pm$.0384 & $-.1148\pm$.0342 & .8728$\pm$.0071 & \textbf{.0518}$\pm$.0032 & .0717$\pm$.0073 & 1.9160$\pm$.0878 & .3483$\pm$.1042 \\
Explicit C0B0O1 & .5153$\pm$.0342 & $-.0655\pm$.0362 & .8753$\pm$.0079 & .0560$\pm$.0033 & .0810$\pm$.0030 & 2.1485$\pm$.1368 & .2792$\pm$.0971 \\
Explicit C0B1O0 & .4919$\pm$.0300 & $-.1957\pm$.0349 & .8524$\pm$.0099 & .0609$\pm$.0048 & .0076$\pm$.0055 & 1.3846$\pm$.0579 & .4495$\pm$.2551 \\
Explicit C0B1O1 & .5071$\pm$.0288 & $-.1852\pm$.0290 & .8586$\pm$.0099 & .0644$\pm$.0038 & .0125$\pm$.0048 & 1.4439$\pm$.0766 & .2178$\pm$.1071 \\
Explicit C1B0O0 & .4947$\pm$.0459 & $-.1201\pm$.0337 & .8705$\pm$.0091 & .0525$\pm$.0038 & .0704$\pm$.0061 & 1.8761$\pm$.1276 & .3223$\pm$.1464 \\
Explicit C1B0O1 & .5018$\pm$.0309 & $-.0760\pm$.0462 & .8701$\pm$.0079 & .0545$\pm$.0039 & .0798$\pm$.0050 & 2.0581$\pm$.1628 & .2313$\pm$.1137 \\
Explicit C1B1O0 & .5141$\pm$.0570 & $-.1723\pm$.0605 & .8612$\pm$.0109 & .0663$\pm$.0096 & .0121$\pm$.0074 & 1.4689$\pm$.1170 & .3996$\pm$.2293 \\
Explicit C1B1O1 & .4942$\pm$.0257 & \textbf{$-.1966$}$\pm$.0318 & .8584$\pm$.0076 & .0643$\pm$.0040 & \textbf{.0072}$\pm$.0040 & 1.4304$\pm$.0378 & .2164$\pm$.2445 \\
\midrule
Fixed $\alpha=2$ & .4941$\pm$.0348 & $-.1687\pm$.0544 & .8690$\pm$.0065 & .0531$\pm$.0035 & .0151$\pm$.0073 & 1.4221$\pm$.0455 & .2021$\pm$.1203 \\
Direct mean variance & .4999$\pm$.0269 & $-.1702\pm$.0316 & .8515$\pm$.0108 & .0631$\pm$.0027 & .0090$\pm$.0085 & 1.4115$\pm$.0584 & .2443$\pm$.1461 \\
Raw teacher & .5048$\pm$.0320 & $-.1855\pm$.0348 & .8750$\pm$.0053 & .0538$\pm$.0038 & .0152$\pm$.0051 & 1.4642$\pm$.0943 & .3786$\pm$.3094 \\
MORSE-RC & .5050$\pm$.0220 & $-.0870\pm$.0359 & .8747$\pm$.0075 & .0529$\pm$.0023 & .0770$\pm$.0049 & 2.0331$\pm$.1325 & .2813$\pm$.0667 \\
\bottomrule
\end{tabular}}
\end{table*}

\begin{table}[!htbp]
\caption{Declared primary comparisons in the component study: full MORSE minus each control (negative favors MORSE), paired 95\% intervals from 20,000 whole-anchor or whole-seed draws, wins, and Holm-adjusted exact sign-flip $p$ within each domain. The five GP anchors are descriptive.}
\label{tab:components-paired}
\centering
\small
\begin{tabular}{llccc}
\toprule
Endpoint & Control & Difference [95\% interval] & Wins & Holm $p$ \\
\midrule
GP (LogVar) & no Condition & $+$0.00356 [$-$0.00845, $+$0.01383] & 1/5 & 1.000 \\
GP (LogVar) & no Bound & $-$1.21951 [$-$1.32962, $-$1.11433] & 5/5 & 0.375 \\
GP (LogVar) & no Order & $+$0.02515 [$+$0.01646, $+$0.03384] & 0/5 & 0.375 \\
GP (LogVar) & explicit head & $+$0.00994 [$+$0.00080, $+$0.01802] & 1/5 & 0.750 \\
GP (LogVar) & fixed $\alpha=2$ & $+$0.00411 [$-$0.00470, $+$0.01234] & 1/5 & 1.000 \\
GP (LogVar) & direct mean variance & $+$0.00379 [$-$0.00625, $+$0.01383] & 2/5 & 1.000 \\
\midrule
Pulse (NLL) & no Condition & $-$0.02863 [$-$0.06976, $+$0.00845] & 6/10 & 1.000 \\
Pulse (NLL) & no Bound & $-$0.10055 [$-$0.12413, $-$0.07871] & 10/10 & 0.012 \\
Pulse (NLL) & no Order & $-$0.01011 [$-$0.02622, $+$0.00600] & 5/10 & 1.000 \\
Pulse (NLL) & explicit head & $+$0.01298 [$-$0.00524, $+$0.03242] & 5/10 & 1.000 \\
Pulse (NLL) & fixed $\alpha=2$ & $-$0.01501 [$-$0.05050, $+$0.01475] & 6/10 & 1.000 \\
Pulse (NLL) & direct mean variance & $-$0.01342 [$-$0.03901, $+$0.00644] & 7/10 & 1.000 \\
\bottomrule
\end{tabular}
\end{table}

\paragraph{Findings.}
Bound has the clearest effect: removing it raises GP scale error from .7349 to 1.9544 and pulse NLL from $-.1837$ to $-.0831$, with the full model better on all ten pulse seeds (Holm $p=.012$; Table~\ref{tab:components-paired}). Restoring the residual path raises GP scale error to 2.0748, and the unnormalized teacher to 1.6972, so decoupling and normalization by the data scale both matter. Condition improves GP NLL and coverage but not scale error (.7313 without it), and Order improves ranking (Spearman .6215 versus .5873) at a small cost in scale error (.7097 without it). The NIG head is not necessary for contextual scale alignment: the explicit head (.7249), fixed $\alpha$ (.7308), and direct mean variance (.7311) align as well. On pulse, the NIG head gives better error AUROC and AURC than the explicit and direct-variance heads, whereas the explicit head has lower mean MSE, NLL, and coverage gap. The study therefore supports distinct roles for the components, not the necessity of each component or of the four NIG outputs.

\section{Pulse Proxy and Training-Size Audit}
\label{app:pulse-audit}

\paragraph{Reference construction.}
Exact Bayes inference is not practical for the pulse generator: self-normalized importance sampling over its five latent parameters reached the pre-specified minimum effective sample size of 200 over all 128 masks in none of 64 test windows (median window-level ESS 9.45). We therefore approximate the population-optimal predictor with first-order networks trained on freshly generated windows at every step (unlimited data). Eight proxies use the standard two-layer, width-64 architecture and four use a four-layer, width-128 architecture; all use exponential moving averages of the weights (decay .9995), a cosine schedule to a final learning rate of $10^{-6}$, and batch size 256. For each architecture, the ensemble mean over seeds is evaluated on 128 fixed masks per common target. We subtract the seed-noise contribution $\widehat N(t)=S^{-1}\,\mathrm{mean}_{s<s'}\Var_j(\mu_{s,j,t}-\mu_{s',j,t})/2$ from the cross-mask variance of the ensemble mean, which gives the corrected reference $R^\ast_c(t)$, and call a target resolvable if $R^\ast_c(t)\geq10\,\widehat N(t)$. Resolvability is defined from proxy quantities only.

\paragraph{Quality checks.}
A split of the standard seeds into two halves gives corrected references that agree to LogVar MAE .117 on resolvable targets (threshold .2). The two architectures differ by .241 (threshold .2), and the Bayes check could not arbitrate, so the larger architecture was selected by its lower validation clean-target MSE (.19960 versus .20312), as specified in advance for this case. Its ensemble mean has clean-target MSE .19930, below the best of the 160 finite-$N$ method anchors (.22550), and 99.47\% of common targets are resolvable (32,594 targets). These checks establish a usable operational reference; they do not show that it equals the Bayes predictor.

\paragraph{Training-size results.}
Table~\ref{tab:pulse-n} compares the 128-mask response of each MORSE predictor with $R^\ast_c$ on the resolvable targets. The predictor's error decreases at every step and improves from $N=512$ to $32{,}768$ on all ten paired anchors. The learned $\umean$ follows the same trend at a much larger error. The mean predictor response rises from .009 at $N=512$ to .040 at $N=32{,}768$ and passes the fixed reference mean of .034, while the finite-training-set reference decreases; the early deficit is consistent with the shrinkage mechanism described after Proposition~\ref{prop:population}.

\begin{table}[!t]
\caption{Pulse training-size audit on the 32,594 resolvable common targets (ten paired anchors per $N$). The first column is the 128-mask response of each MORSE predictor, $\robs(\theta)$, against the corrected proxy reference $R^\ast_c$ (mean $\pm$ sample standard deviation); the second is MORSE's learned $\umean$ against the same reference. The mean of $R^\ast_c$ is .03395 at every $N$. The last column is the finite-training-set reference on the same targets.}
\label{tab:pulse-n}
\centering
\small
\begin{tabular}{rcccc}
\toprule
$N$ & $\robs(\theta)$ vs $R^\ast_c$ LogVar $\downarrow$ & $\umean$ vs $R^\ast_c$ LogVar $\downarrow$ & Mean $\robs(\theta)$ & Mean $\vref$\\
\midrule
512 & 1.987$\pm$.187 & 4.477 & .009 & .0427\\
2048 & 1.562$\pm$.092 & 3.327 & .022 & .0445\\
8192 & 1.281$\pm$.091 & 2.345 & .038 & .0361\\
32768 & .917$\pm$.087 & 1.687 & .040 & .0200\\
\bottomrule
\end{tabular}
\end{table}

\paragraph{Checkpoint provenance.}
The original synthetic runs did not retain model checkpoints. For this audit, models were retrained with the frozen protocol and seeds, and a retrained model was accepted as equivalent only if its fixed-bank predictions matched the archived ones to $10^{-6}$ MAE. Some $N=32{,}768$ anchors could not be reproduced exactly on the available hardware and were retrained on a different GPU type; the audit uses these retrained models consistently, so their $\vref$-based values can differ slightly from Table~\ref{tab:app-n-full}.

\paragraph{Finite-training-set audit across sizes.}
The original finite-training-set construction was repeated at seven training sizes, each with an independent bank of 50 first-order models and ten paired second-order anchors (Figure~\ref{fig:app-n-sensitivity} and Table~\ref{tab:app-n-full}). Training uses 100 epochs at every size, so larger datasets receive proportionally more gradient updates. MORSE has the lowest LogVar MAE to $\vref$ at every size and its error decreases at all six transitions, whereas baseline magnitudes vary sharply and non-monotonically. The pooled MORSE mean contracts from .084 to .014 at $N=8{,}192$ and then rises to .022 at $N=32{,}768$ (a 55\% increase), while $\vref$ keeps decreasing; the close match of the two means at $N=32{,}768$ is therefore a crossing of opposite trends rather than convergence, as Proposition~\ref{prop:population} predicts.

\begin{figure}[!t]
  \centering
  \includegraphics[width=.85\textwidth]{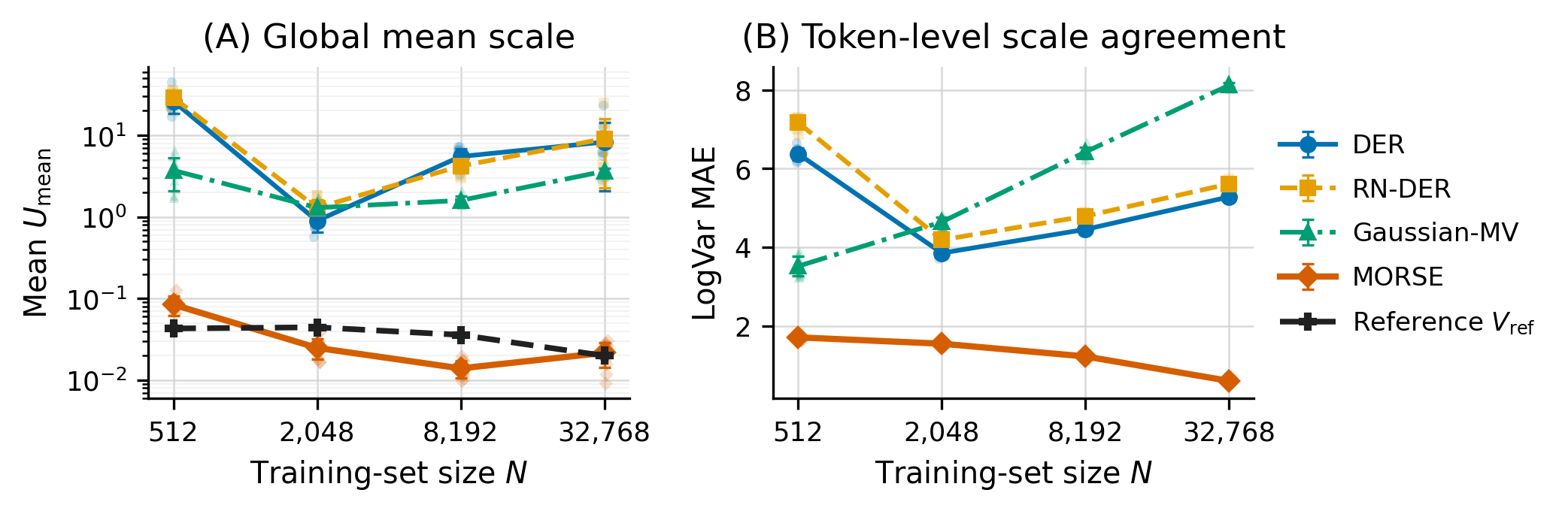}
  \caption{Finite-training-set audit with ten paired anchors and an independent 50-model reference bank per $N$. (A) Mean $\umean$ and mean $\vref$. (B) Token-level log-variance MAE to $\vref$; lower is better. Curves show mean $\pm$ standard deviation; pale markers show individual anchors.}
  \label{fig:app-n-sensitivity}
\end{figure}

\begin{table}[!t]
\caption{Finite-training-set audit across seven training sizes. Panel A: ten paired second-order anchors per size (mean $\pm$ sample standard deviation); mean $\umean$ is descriptive. LogVar MAE is the common-target average of $|\log(\umean+\epsilon)-\log(\vref+\epsilon)|$. Panel B: empirical reference variance; each row uses an independent bank of 50 first-order models on the same fixed test and mask bank.}
\label{tab:app-n-full}
\centering
\footnotesize
\setlength{\tabcolsep}{2.5pt}
\resizebox{\linewidth}{!}{%
\begin{tabular}{rcccccccc}
\toprule
& \multicolumn{4}{c}{A. Mean $\umean$} & \multicolumn{4}{c}{A. LogVar MAE to $\vref$ $\downarrow$}\\
\cmidrule(lr){2-5}\cmidrule(lr){6-9}
$N$ & DER & RN-DER & Gaussian-MV & MORSE & DER & RN-DER & Gaussian-MV & MORSE\\
\midrule
512 & 26.21$\pm$7.65 & 28.90$\pm$5.43 & 3.72$\pm$1.61 & .084$\pm$.022 & 6.371$\pm$.153 & 7.166$\pm$.139 & 3.517$\pm$.245 & \textbf{1.708}$\pm$.061\\
1024 & 411.22$\pm$227.15 & 307.81$\pm$236.56 & 1.542$\pm$.351 & .0479$\pm$.0065 & 7.719$\pm$.529 & 6.315$\pm$1.161 & 3.893$\pm$.137 & \textbf{1.564}$\pm$.079\\
2048 & .897$\pm$.252 & 1.293$\pm$.340 & 1.297$\pm$.286 & .0249$\pm$.0069 & 3.850$\pm$.085 & 4.186$\pm$.111 & 4.641$\pm$.119 & \textbf{1.546}$\pm$.040\\
4096 & .815$\pm$.180 & .991$\pm$.202 & 1.585$\pm$.302 & .0159$\pm$.0032 & 4.069$\pm$.122 & 4.434$\pm$.061 & 5.759$\pm$.226 & \textbf{1.505}$\pm$.038\\
8192 & 5.50$\pm$1.39 & 4.21$\pm$.77 & 1.592$\pm$.202 & .0140$\pm$.0033 & 4.452$\pm$.073 & 4.788$\pm$.051 & 6.421$\pm$.118 & \textbf{1.223}$\pm$.030\\
16384 & 6.73$\pm$4.15 & 6.40$\pm$4.65 & 2.627$\pm$.229 & .0147$\pm$.0037 & 4.804$\pm$.060 & 5.195$\pm$.047 & 7.331$\pm$.078 & \textbf{.863}$\pm$.048\\
32768 & 8.28$\pm$6.18 & 9.07$\pm$6.80 & 3.629$\pm$.268 & .0217$\pm$.0074 & 5.278$\pm$.058 & 5.609$\pm$.054 & 8.116$\pm$.058 & \textbf{.600}$\pm$.046\\
\bottomrule
\end{tabular}%
}

\medskip
\begin{tabular}{rccc}
\toprule
$N$ & B. Mean $\vref$ & Median & 90th percentile\\
\midrule
512 & .04305 & .000713 & .17473\\
1024 & .04812 & .000515 & .18551\\
2048 & .04449 & .000406 & .14018\\
4096 & .03994 & .000270 & .08216\\
8192 & .03591 & .000203 & .06683\\
16384 & .02501 & .000130 & .03738\\
32768 & .01992 & .000088 & .03799\\
\bottomrule
\end{tabular}
\end{table}

\paragraph{Data-scale calibration by noise region.}
The generator provides the normalized local noise variance of every target: .00166 in the low-noise region and .16614 in the high-noise region. Averaging $|\log(\overline U_{\mathrm{data},t}+\epsilon)-\log(\sigma_t^2+\epsilon)|$ over each region at $N=32{,}768$, MORSE obtains low/high-noise errors of 1.053/.294, compared with 1.089/.306 for Gaussian-MV, 1.461/4.959 for DER, and 3.680/5.152 for RN-DER. This separates the calibration of $\udata$ from the reference alignment of $\umean$.

\paragraph{Extended comparison with ensembles.}
Table~\ref{tab:pulse-extended} adds five-member deep and bootstrap ensembles to the $N=2048$ pulse comparison on the resolvable targets. Both ensembles are closer than MORSE's learned score to the contextual proxy and to $\vref$. Ensembles are closer on this proxy, so the contextual advantage observed on GP is benchmark-dependent; the ensembles also use five separately trained models per anchor.

\begin{table*}[t]
\caption{Pulse proxy audit on five paired anchors and 32,594 resolvable targets (mean $\pm$ sample SD). Context variance uses the selected four-model large-architecture proxy, not an exact Bayes reference. The independent training-set reference contains 50 first-order models. Both ensemble controls outperform MORSE in LogVar MAE to both targets in this benchmark.}
\label{tab:pulse-extended}
\centering
\small
\setlength{\tabcolsep}{4pt}
\begin{tabular}{lcccc}
\toprule
Method & Context LogVar $\downarrow$ & Context $\rho$ $\uparrow$ & $V_{\mathrm{ref}}$ LogVar $\downarrow$ & $V_{\mathrm{ref}}$ $\rho$ $\uparrow$ \\
\midrule
DER & $6.140 \pm 0.096$ & $0.458 \pm 0.008$ & $3.847 \pm 0.089$ & $0.806 \pm 0.012$ \\
RN-DER & $6.541 \pm 0.143$ & $0.464 \pm 0.009$ & $4.201 \pm 0.136$ & $0.861 \pm 0.006$ \\
Gaussian-MV & $7.040 \pm 0.104$ & $\mathbf{0.464 \pm 0.016}$ & $4.678 \pm 0.108$ & $0.831 \pm 0.010$ \\
MORSE & $3.332 \pm 0.037$ & $0.428 \pm 0.015$ & $1.545 \pm 0.051$ & $0.855 \pm 0.011$ \\
MORSE-IS & $3.526 \pm 0.052$ & $0.447 \pm 0.010$ & $1.485 \pm 0.073$ & $\mathbf{0.867 \pm 0.005}$ \\
Deep ensemble (5) & $\mathbf{2.555 \pm 0.210}$ & $0.429 \pm 0.057$ & $\mathbf{0.722 \pm 0.100}$ & $0.751 \pm 0.109$ \\
Bootstrap ensemble (5) & $2.661 \pm 0.210$ & $0.372 \pm 0.038$ & $0.748 \pm 0.108$ & $0.609 \pm 0.246$ \\
\bottomrule
\end{tabular}
\end{table*}

\section{Gaussian-Process Reference Audits}
\label{app:gp}

\subsection{Default Setting}

\paragraph{Generator and oracle.}
Length-128 windows come from a zero-mean Mat\'ern-$5/2$ Gaussian process with unit signal variance and length scale .05. Independent noise has standard deviation .05 or .30, separated by a boundary sampled uniformly from $[.2,.8]$. A separate calibration bank of 20,000 windows defines one affine normalization shared by every model and by the oracle. The Bayes oracle computes the exact conditional mean at hidden positions from the known covariance kernel and local noise variances~\citep{rasmussen2006gaussian}; the learned models receive only their masked observations and positions. Because the oracle conditions on each window's true noise variances, including the boundary, it is a known-noise Bayes target. The population-optimal predictor $\mu^\ast$ of Proposition~\ref{prop:population} for models that see only the visible values would instead average the conditional mean over the boundary's posterior. All compared models receive the same information, so this distinction concerns what the reference represents rather than the inputs available to any one method. The fixed test bank contains 1,024 windows and 32,768 common targets. The variance of the 128 exact conditional means is a finite-mask approximation to the population contextual response.

\paragraph{Training protocol.}
Five paired anchor datasets contain 2,048 windows each. All models use the frozen two-layer, width-64, four-head Transformer, AdamW with learning rate $10^{-3}$ and weight decay $10^{-4}$, batch size 64, and 100 epochs, with four views that mask half of the tokens and a shared-target rate of .25. Deep ensembles vary the initialization on each anchor; bootstrap ensembles resample each anchor with replacement and fix the initialization. MC dropout selects its probability from $\{.05,.10,.20\}$ by Gaussian NLL on a deterministic 10\% holdout of the anchor and then refits on the complete anchor; four anchors select .10 and one selects .20. The selection uses no test or reference quantities.

\paragraph{Two variation axes.}
The contextual reference varies the mask while fixing the predictor. The training-set reference $\vref$ comes from 20 first-order models trained on disjoint datasets: at each fixed evaluation context we compute the cross-model variance and then average over the four shared contexts. Ensemble and dropout scores use the same operation on their model or stochastic-pass axis, and evidential scores average $\umean$ over the same four contexts, so no table mixes model variance with variance caused by changing masks.

\begin{table*}[t]
\caption{GP two-reference audit on five paired anchors (mean $\pm$ sample SD). Center is RMSE of first-order conditional means against analytic Bayes means. The same four evaluation contexts are used for every learned score and for the 20-model independent $V_{\mathrm{ref}}$ bank; model-axis variances are computed before averaging over contexts. Raw rows use 4 or 128 masks and estimate context variation directly. The two reference quantities measure different sources of variation. The no-attention MC dropout row is retrained without attention-probability dropout (Appendix~\ref{app:mc-sites}).}
\label{tab:gp-two-reference}
\centering
\small
\setlength{\tabcolsep}{4pt}
\resizebox{\linewidth}{!}{%
\begin{tabular}{lccccc}
\toprule
Method & Center RMSE $\downarrow$ & Context LogVar $\downarrow$ & Context $\rho$ $\uparrow$ & $V_{\mathrm{ref}}$ LogVar $\downarrow$ & $V_{\mathrm{ref}}$ $\rho$ $\uparrow$ \\
\midrule
DER & $0.135 \pm 0.004$ & $2.150 \pm 0.105$ & $0.526 \pm 0.006$ & $3.432 \pm 0.105$ & $0.486 \pm 0.009$ \\
RN-DER & $\mathbf{0.129 \pm 0.004}$ & $2.414 \pm 0.071$ & $0.525 \pm 0.006$ & $3.704 \pm 0.070$ & $0.481 \pm 0.006$ \\
Gaussian-MV & $0.139 \pm 0.008$ & $4.336 \pm 0.111$ & $0.506 \pm 0.017$ & $5.632 \pm 0.111$ & $0.455 \pm 0.021$ \\
Deep ensemble (5) & $0.132 \pm 0.002$ & $2.166 \pm 0.140$ & $0.303 \pm 0.046$ & $0.915 \pm 0.091$ & $0.680 \pm 0.052$ \\
Bootstrap ensemble (5) & $0.129 \pm 0.002$ & $1.607 \pm 0.092$ & $0.303 \pm 0.055$ & $\mathbf{0.548 \pm 0.027}$ & $0.697 \pm 0.035$ \\
MC dropout (50) & $0.133 \pm 0.002$ & $0.807 \pm 0.013$ & $0.467 \pm 0.026$ & $1.185 \pm 0.107$ & $\mathbf{0.797 \pm 0.017}$ \\
\quad no attention (50) & $0.135 \pm 0.007$ & $1.012 \pm 0.065$ & $0.416 \pm 0.055$ & $0.732 \pm 0.070$ & $0.728 \pm 0.029$ \\
MORSE & $0.144 \pm 0.005$ & $0.774 \pm 0.022$ & $0.537 \pm 0.008$ & $1.499 \pm 0.098$ & $0.553 \pm 0.035$ \\
MORSE-IS & $0.144 \pm 0.006$ & $1.109 \pm 0.067$ & $0.539 \pm 0.008$ & $2.263 \pm 0.086$ & $0.550 \pm 0.034$ \\
MORSE raw (4) & $0.144 \pm 0.005$ & $1.212 \pm 0.007$ & $0.474 \pm 0.010$ & $1.512 \pm 0.027$ & $0.443 \pm 0.007$ \\
RN-DER raw (4) & $\mathbf{0.129 \pm 0.004}$ & $1.193 \pm 0.005$ & $0.498 \pm 0.007$ & $1.493 \pm 0.016$ & $0.429 \pm 0.006$ \\
MORSE raw (128) & $0.144 \pm 0.005$ & $0.673 \pm 0.032$ & $0.718 \pm 0.016$ & $1.682 \pm 0.044$ & $0.605 \pm 0.009$ \\
RN-DER raw (128) & $\mathbf{0.129 \pm 0.004}$ & $\mathbf{0.615 \pm 0.024}$ & $\mathbf{0.756 \pm 0.013}$ & $1.635 \pm 0.033$ & $0.583 \pm 0.011$ \\
\bottomrule
\end{tabular}%
}
\end{table*}

MORSE has the lowest contextual LogVar MAE among the primary comparators, while bootstrap ensembles better match $\vref$ (.548 versus 1.499). Relative to bootstrap member variance, MORSE's score is closer to the contextual reference on all five anchors (paired difference $-.833$, 95\% interval $[-.911,-.757]$) and farther from $\vref$ on all five ($+.951$, $[.881,1.041]$); relative to MC dropout, the corresponding differences are $-.033$ ($[-.041,-.023]$) and $+.314$ ($[.209,.447]$). RN-DER's raw 128-mask variance is closer to the contextual reference than MORSE's raw variance (.615 versus .673; paired difference $+.058$, interval $[.029,.095]$, RN-DER better on 5/5 anchors), and every comparator's predictive mean is closer to the Bayes mean than MORSE's (center RMSE .129--.139 versus .144; RN-DER better on 5/5 anchors). These results distinguish the learned score's efficiency from the quality of the underlying predictive mean.

\begin{table*}[t]
\caption{GP context-distribution diagnostics (five-anchor mean $\pm$ sample SD). A Gaussian moment interval is centered at the four-context average prediction with variance equal to the reported score. Coverage is measured against the 128 Bayes conditional means, not noisy outcomes. Width is relative to their empirical central 95\% interval. Variance ratio is the within-anchor median score/reference ratio. These are operational intervals, not native NIG posteriors.}
\label{tab:gp-context-intervals}
\centering
\small
\setlength{\tabcolsep}{4pt}
\begin{tabular}{lcccc}
\toprule
Method & Variance ratio $\to 1$ & Coverage $\to .95$ & CovGap$_{95}$ $\downarrow$ & Width ratio $\to 1$ \\
\midrule
DER & $8.357 \pm 0.891$ & $0.992 \pm 0.001$ & $0.042 \pm 0.001$ & $2.911 \pm 0.157$ \\
RN-DER & $10.859 \pm 0.799$ & $0.995 \pm 0.001$ & $0.045 \pm 0.001$ & $3.306 \pm 0.121$ \\
Gaussian-MV & $79.267 \pm 8.858$ & $1.000 \pm 0.000$ & $0.050 \pm 0.000$ & $9.250 \pm 0.457$ \\
Deep ensemble (5) & $0.115 \pm 0.021$ & $0.395 \pm 0.033$ & $0.555 \pm 0.033$ & $0.373 \pm 0.028$ \\
Bootstrap ensemble (5) & $0.223 \pm 0.026$ & $0.523 \pm 0.023$ & $0.427 \pm 0.023$ & $0.501 \pm 0.023$ \\
MC dropout (50) & $0.809 \pm 0.083$ & $0.776 \pm 0.024$ & $0.174 \pm 0.024$ & $0.929 \pm 0.047$ \\
\quad no attention (50) & $0.484 \pm 0.055$ & $0.661 \pm 0.017$ & $0.289 \pm 0.017$ & $0.714 \pm 0.049$ \\
MORSE & $1.168 \pm 0.122$ & $0.823 \pm 0.021$ & $0.127 \pm 0.021$ & $1.083 \pm 0.053$ \\
MORSE-IS & $2.545 \pm 0.220$ & $0.936 \pm 0.010$ & $0.014 \pm 0.010$ & $1.597 \pm 0.065$ \\
\bottomrule
\end{tabular}
\end{table*}

\paragraph{Intervals.}
Table~\ref{tab:gp-context-intervals} shows that MORSE's median score-to-reference ratio is 1.17, close to one, but its Gaussian moment intervals cover 82\% of the Bayes conditional means at a nominal 95\%. MORSE-IS nearly reaches nominal coverage by overestimating the variance (ratio 2.5, width ratio 1.6). The evidential baselines overcover with intervals three to nine times too wide, while ensembles underestimate the contextual spread.

\paragraph{Amortization.}
On an independently reserved bank of test-time masks, the raw cross-mask variance of MORSE's predictor has LogVar MAE 1.227 with four masks and that of RN-DER 1.212, compared with .774 for MORSE's learned score averaged over four contexts (four forward passes); the learned score wins on all five anchors against both. The raw MORSE variance first becomes more accurate than the learned score at 32 masked passes on four anchors and at 128 on one; RN-DER requires 16 passes on one anchor and 32 on four. At 128 passes RN-DER's raw variance remains better than MORSE's (.662 versus .716 on this bank). These reserved-bank errors exceed the raw rows of Table~\ref{tab:gp-two-reference}, which reuse masks from the reference bank. These are inference-time pass counts, not a claim of lower training cost.

\paragraph{Observation-policy sweep.}
Fifteen additional MORSE models were trained on the same five anchors at total mask rates .30, .50, and .70 (common-target rates .15, .25, and .35), each scored against its own 128-mask exact reference. The rate-.50 models are retrained with new mask draws, so their error (.787) differs slightly from that of Table~\ref{tab:gp-bayes} (.774). Mean LogVar MAE is 1.227, .787, and .711. On the shared rate-.50 evaluation bank, the mean $\umean$ of these models is .0126, .0154, and .0202. The reported scale therefore depends on the training policy, and so does its accuracy: the rate-.30 models are 56\% less accurate than the rate-.50 models.

\paragraph{Computation.}
On the recorded V100-SXM2-32GB runs, mean training time per anchor was 83.1 seconds for MORSE, 296.2 seconds for the five-member deep ensemble, and 290.1 seconds for the bootstrap ensemble. MC dropout's selected fit took 57.6 seconds plus 159.2 seconds for the three validation candidates. These timers cover training, not inference or earlier method development. The GP trainer uses four masked passes per step for every model, so five-member ensemble training is not compute-matched to a single MORSE fit. Inference uses one, five, or 50 passes per context for MORSE, ensembles, and MC dropout; the four-context averages multiply these counts by four.

\paragraph{Statistical reporting.}
Anchors are the units of replication. We report mean and sample standard deviation, paired differences, and percentile intervals from 20,000 anchor bootstrap draws. The paired MORSE-minus-MC-dropout LogVar difference is $-.0326$ with interval $[-.0413,-.0232]$ and MORSE better on all five anchors; with five anchors the smallest attainable two-sided exact sign-flip $p$-value is .0625, so we make no significance claim. The intervals condition on the fixed test and reference banks. All tables use the same four contexts for every learned score and for $\vref$.

\subsection{Robustness Across GP Configurations}
\label{app:gp-scan}

\paragraph{Settings.}
The audit above is repeated in nine settings that cross Mat\'ern-$5/2$ length scales .025, .05, and .10 with three noise shapes: the step profile of the default setting; a window-homoscedastic profile, constant within each window and equal to that window's average step variance; and a smooth profile with transition width .05 and the same window-average variance. Each setting has five paired anchors of 2,048 training windows, a test bank of 1,024 windows, an oracle reference over 128 masks, and its own independent 20-model $\vref$ bank. The seven estimators, the frozen MORSE settings, and the four-context scoring rule are unchanged, and no loss weight differs across settings. As in the default setting, the oracle conditions on each window's true noise profile. The default row reuses the checkpoints behind Table~\ref{tab:gp-bayes} but scores them against an independent 128-mask oracle bank, which gives 0.779 and 0.809 for MORSE and MC dropout instead of 0.774 and 0.807; the difference reflects the evaluation bank, not retraining. Logged training for the new settings, including MC dropout selection, took 14.9 GPU-hours on V100-SXM2-32GB GPUs.

\begin{table}[!htbp]
\caption{GP configuration scan (five paired anchors per setting; LogVar MAE, mean $\pm$ sample SD; lower is better; $\ell$ is the length scale). (A) Contextual reference; $\Delta$ is MORSE minus MC dropout with a descriptive 95\% interval from 20,000 paired anchor bootstrap draws, and wins count anchors on which MORSE is lower. (B) Training-set reference $\vref$. In every row, MORSE has the lowest mean contextual error and the bootstrap ensemble the lowest mean $\vref$ error among the seven estimators; all seven are listed in the supplement (\texttt{results/gp\_config\_scan/}), and Table~\ref{tab:mc-sites} adds MC dropout without attention-probability dropout. The default row re-scores the checkpoints of Table~\ref{tab:gp-bayes} on an independent oracle bank.}
\label{tab:gp-scan}
\centering
\footnotesize
\setlength{\tabcolsep}{3.5pt}
\begin{tabular}{llccccc}
\multicolumn{7}{l}{(A) Contextual reference}\\
\toprule
$\ell$ & Noise & MORSE & MC dropout & Bootstrap & $\Delta$ [95\% interval] & Wins \\
\midrule
.025 & Step & $0.749 \pm 0.012$ & $1.176 \pm 0.034$ & $2.210 \pm 0.028$ & $-0.427$ $[-0.461, -0.395]$ & 5/5 \\
.025 & Window-homosc. & $0.742 \pm 0.014$ & $1.196 \pm 0.037$ & $2.163 \pm 0.149$ & $-0.454$ $[-0.477, -0.431]$ & 5/5 \\
.025 & Smooth & $0.749 \pm 0.009$ & $1.175 \pm 0.035$ & $2.207 \pm 0.042$ & $-0.426$ $[-0.464, -0.399]$ & 5/5 \\
\midrule
.05 & Step (default) & $0.779 \pm 0.022$ & $0.809 \pm 0.013$ & $1.606 \pm 0.092$ & $-0.029$ $[-0.039, -0.019]$ & 5/5 \\
.05 & Window-homosc. & $0.657 \pm 0.005$ & $0.772 \pm 0.026$ & $1.713 \pm 0.147$ & $-0.114$ $[-0.137, -0.094]$ & 5/5 \\
.05 & Smooth & $0.738 \pm 0.016$ & $0.801 \pm 0.015$ & $1.654 \pm 0.080$ & $-0.063$ $[-0.072, -0.051]$ & 5/5 \\
\midrule
.10 & Step & $0.845 \pm 0.007$ & $0.868 \pm 0.047$ & $1.188 \pm 0.059$ & $-0.023$ $[-0.064, 0.000]^{\dagger}$ & 4/5 \\
.10 & Window-homosc. & $0.589 \pm 0.011$ & $0.682 \pm 0.015$ & $1.424 \pm 0.108$ & $-0.093$ $[-0.110, -0.076]$ & 5/5 \\
.10 & Smooth & $0.776 \pm 0.041$ & $0.801 \pm 0.027$ & $1.250 \pm 0.080$ & $-0.026$ $[-0.057, 0.007]$ & 4/5 \\
\bottomrule
\end{tabular}

\medskip
\begin{tabular}{llccc}
\multicolumn{5}{l}{(B) Training-set reference $\vref$}\\
\toprule
$\ell$ & Noise & MORSE & MC dropout & Bootstrap \\
\midrule
.025 & Step & $1.678 \pm 0.046$ & $0.909 \pm 0.047$ & $0.566 \pm 0.028$ \\
.025 & Window-homosc. & $1.756 \pm 0.050$ & $0.990 \pm 0.038$ & $0.549 \pm 0.022$ \\
.025 & Smooth & $1.669 \pm 0.046$ & $0.908 \pm 0.040$ & $0.564 \pm 0.027$ \\
\midrule
.05 & Step (default) & $1.499 \pm 0.098$ & $1.185 \pm 0.107$ & $0.548 \pm 0.027$ \\
.05 & Window-homosc. & $1.435 \pm 0.081$ & $1.090 \pm 0.074$ & $0.601 \pm 0.029$ \\
.05 & Smooth & $1.503 \pm 0.062$ & $1.176 \pm 0.100$ & $0.557 \pm 0.038$ \\
\midrule
.10 & Step & $1.103 \pm 0.013$ & $0.795 \pm 0.106$ & $0.562 \pm 0.038$ \\
.10 & Window-homosc. & $1.266 \pm 0.055$ & $0.796 \pm 0.028$ & $0.584 \pm 0.041$ \\
.10 & Smooth & $1.255 \pm 0.074$ & $0.919 \pm 0.113$ & $0.581 \pm 0.047$ \\
\bottomrule
\end{tabular}

\smallskip
{\footnotesize $^{\dagger}$The unrounded upper endpoint is $+0.00003$.}
\end{table}

\paragraph{Results.}
Table~\ref{tab:gp-scan} lists MORSE, MC dropout (its closest comparator in the default setting), and the bootstrap ensemble against both references. In all nine settings, MORSE has the lowest mean contextual LogVar MAE among the seven estimators and the bootstrap ensemble the lowest mean LogVar MAE against $\vref$, so the reversal of Table~\ref{tab:gp-bayes} recurs in every setting; averaged over settings, MORSE minus bootstrap is $-0.977$ against the contextual reference and $+0.895$ against $\vref$. The margin over MC dropout depends on the length scale: paired differences are $-0.43$ to $-0.45$ at .025 but $-0.02$ to $-0.11$ at .05 and .10, and the intervals of the step and smooth settings at .10 include zero. The stable advantage is in scale rather than ordering: MC dropout has the higher mean contextual Spearman correlation in five settings, all three at .025 and the window-homoscedastic settings at .05 and .10. Nor does the advantage come from a uniformly better predictor: RN-DER's raw 128-mask response is closer to the oracle reference than MORSE's in seven settings, all except the window-homoscedastic settings at .025 and .10. With five anchors per setting we make no significance claim; the settings share anchor seed roles, so configuration--anchor pairs are not independent replicates.

\subsection{Dropout Sites in MC Dropout}
\label{app:mc-sites}

The MC dropout baseline activates dropout at two attention-probability sites and six activation and residual sites. A second variant sets the attention-probability sites to zero during candidate fitting, refitting, and inference, and keeps the other six. Both variants select $p\in\{.05,.10,.20\}$ by Gaussian NLL on an anchor-only holdout and refit for 100 epochs; the selected $p$ differs between the variants in 22 of 45 cases, so the comparison involves refitting rather than an inference-only switch. Both use 50 draws in each of the four contexts.

\begin{table}[!htbp]
\caption{MC dropout with all dropout sites active (All) and with attention-probability dropout disabled (No attn.) in the nine GP settings (five paired anchors; LogVar MAE, mean $\pm$ sample SD; lower is better; $\ell$ is the length scale). $\Delta$ is No attn.\ minus All with a descriptive 95\% interval from 20,000 paired anchor bootstrap draws. (A) Contextual reference, with MORSE. (B) Training-set reference $\vref$, with the bootstrap ensemble.}
\label{tab:mc-sites}
\centering
\footnotesize
\setlength{\tabcolsep}{3.5pt}
\begin{tabular}{llcccc}
\multicolumn{6}{l}{(A) Contextual reference}\\
\toprule
$\ell$ & Noise & All & No attn. & MORSE & $\Delta$ [95\% interval] \\
\midrule
.025 & Step & $1.176 \pm 0.034$ & $1.914 \pm 0.102$ & $0.749 \pm 0.012$ & $+0.738$ $[+0.625, +0.827]$ \\
.025 & Window-homosc. & $1.196 \pm 0.037$ & $1.883 \pm 0.109$ & $0.742 \pm 0.014$ & $+0.687$ $[+0.608, +0.777]$ \\
.025 & Smooth & $1.175 \pm 0.035$ & $1.909 \pm 0.134$ & $0.749 \pm 0.009$ & $+0.735$ $[+0.593, +0.841]$ \\
\midrule
.05 & Step (default) & $0.809 \pm 0.013$ & $1.012 \pm 0.064$ & $0.779 \pm 0.022$ & $+0.203$ $[+0.151, +0.240]$ \\
.05 & Window-homosc. & $0.772 \pm 0.026$ & $1.052 \pm 0.073$ & $0.657 \pm 0.005$ & $+0.281$ $[+0.208, +0.348]$ \\
.05 & Smooth & $0.801 \pm 0.015$ & $1.041 \pm 0.066$ & $0.738 \pm 0.016$ & $+0.240$ $[+0.188, +0.272]$ \\
\midrule
.10 & Step & $0.868 \pm 0.047$ & $0.912 \pm 0.003$ & $0.845 \pm 0.007$ & $+0.043$ $[-0.0004, +0.068]$ \\
.10 & Window-homosc. & $0.682 \pm 0.015$ & $0.894 \pm 0.056$ & $0.589 \pm 0.011$ & $+0.213$ $[+0.180, +0.269]$ \\
.10 & Smooth & $0.801 \pm 0.027$ & $0.938 \pm 0.028$ & $0.776 \pm 0.041$ & $+0.136$ $[+0.101, +0.168]$ \\
\bottomrule
\end{tabular}

\medskip
\begin{tabular}{llcccc}
\multicolumn{6}{l}{(B) Training-set reference $\vref$}\\
\toprule
$\ell$ & Noise & All & No attn. & Bootstrap & $\Delta$ [95\% interval] \\
\midrule
.025 & Step & $0.909 \pm 0.047$ & $0.453 \pm 0.012$ & $0.566 \pm 0.028$ & $-0.456$ $[-0.487, -0.425]$ \\
.025 & Window-homosc. & $0.990 \pm 0.038$ & $0.454 \pm 0.017$ & $0.549 \pm 0.022$ & $-0.536$ $[-0.575, -0.512]$ \\
.025 & Smooth & $0.908 \pm 0.040$ & $0.459 \pm 0.010$ & $0.564 \pm 0.027$ & $-0.449$ $[-0.474, -0.414]$ \\
\midrule
.05 & Step (default) & $1.185 \pm 0.107$ & $0.732 \pm 0.070$ & $0.548 \pm 0.027$ & $-0.453$ $[-0.542, -0.379]$ \\
.05 & Window-homosc. & $1.090 \pm 0.074$ & $0.663 \pm 0.070$ & $0.601 \pm 0.029$ & $-0.428$ $[-0.523, -0.321]$ \\
.05 & Smooth & $1.176 \pm 0.100$ & $0.690 \pm 0.068$ & $0.557 \pm 0.038$ & $-0.486$ $[-0.573, -0.409]$ \\
\midrule
.10 & Step & $0.795 \pm 0.106$ & $0.599 \pm 0.054$ & $0.562 \pm 0.038$ & $-0.196$ $[-0.326, -0.096]$ \\
.10 & Window-homosc. & $0.796 \pm 0.028$ & $0.598 \pm 0.082$ & $0.584 \pm 0.041$ & $-0.198$ $[-0.271, -0.129]$ \\
.10 & Smooth & $0.919 \pm 0.113$ & $0.606 \pm 0.143$ & $0.581 \pm 0.047$ & $-0.313$ $[-0.467, -0.153]$ \\
\bottomrule
\end{tabular}
\end{table}

Table~\ref{tab:mc-sites} reports both references in the nine settings of Appendix~\ref{app:gp-scan}. Disabling attention-probability dropout raises contextual error and lowers $\vref$ error in all nine settings; in the default setting of Table~\ref{tab:gp-bayes}, the two errors move from .807 and 1.185 to 1.012 and .732. The contextual interval crosses zero only at length scale .10 with step noise. At length scale .025 the variant is the estimator closest to $\vref$, so in every setting the estimator closest to $\vref$ perturbs the model. MORSE has lower contextual error than the variant in every setting, on all five anchors of each. Activation and residual dropout still change later attention maps, so the variant is not a pure parameter-posterior estimator, and its $\vref$ Spearman correlation falls from .797 to .728 in the default setting even as its scale error improves.

\section{Real-Data Protocols and Additional Results}

\subsection{PTB-XL}
\label{app:ptb}

PTB-XL version 1.0.3 contains public 100-Hz, 12-lead electrocardiogram (ECG)
records with patient identifiers~\citep{wagner2020ptbxl,ptbxl103}. We use the five diagnostic
superclasses, official patient-separated folds 1--8 for training, fold 9 for
validation, and fold 10 for testing. Self-supervised pretraining sees only the
training folds. Fine-tuning uses fixed 5\% and 10\% subsets of labeled training
records, with validation Macro-AUROC for checkpoint selection.

Inputs are 12 leads by 1,000 samples and are divided into 40 patches of length
25. The shared encoder has dimension 128, three Transformer layers, four
heads, feed-forward dimension 512, and dropout .10. Self-supervised learning
(SSL) pretraining uses
AdamW ($10^{-3}$ learning rate, $10^{-4}$ weight decay) for 50 epochs;
fine-tuning uses AdamW at $5\times10^{-4}$ for 80 epochs. Batch size is 64,
mixed precision and early stopping are disabled, and the best validation
Macro-AUROC checkpoint is retained. Main results use seeds 1--5 with identical
patient splits and seed-matched labeled subsets.

\begin{table}[!t]
\caption{Paired PTB-XL differences over five matched seeds. Positive values favor MORSE; intervals are paired bootstrap 95\% intervals.}
\label{tab:app-ptb-paired}
\centering
\small
\begin{tabular}{llccc}
\toprule
Labels & Metric & Benefit & Paired 95\% CI & Wins\\
\midrule
\multicolumn{5}{l}{\emph{MORSE versus SimMTM}}\\
5\% & AUROC & .0169 & [.0121, .0208] & 5/5\\
 & AUPRC & .0277 & [.0251, .0308] & 5/5\\
 & Macro-F1 & .0147 & [.0090, .0223] & 5/5\\
 & Brier & .0026 & [$-$.0043, .0096] & 2/5\\
 & AURC & .0009 & [$-$.0078, .0109] & 2/5\\
10\% & AUROC & .0120 & [.0100, .0141] & 5/5\\
 & AUPRC & .0212 & [.0148, .0280] & 5/5\\
 & Macro-F1 & .0160 & [.0135, .0184] & 5/5\\
 & Brier & .0064 & [.0042, .0100] & 5/5\\
 & AURC & .0047 & [.0004, .0105] & 4/5\\
\multicolumn{5}{l}{\emph{ECE, MORSE versus PatchTST}}\\
5\% & ECE & $-$.0123 & [$-$.0206, .0013] & 1/5\\
10\% & ECE & $-$.0107 & [$-$.0295, .0033] & 2/5\\
\bottomrule
\end{tabular}
\end{table}

\begin{table}[!t]
\caption{PTB-XL low-label fine-tuning over five matched seeds (mean $\pm$ sample standard deviation). All methods use the official patient-separated folds, identical seed-specific labeled subsets, and validation Macro-AUROC selection. Bold marks the best mean within each label budget.}
\label{tab:app-ptb-full}
\centering
\small
\setlength{\tabcolsep}{4pt}
\begin{tabular}{lcccccc}
\toprule
Method & AUROC$\uparrow$ & AUPRC$\uparrow$ & Macro-F1$\uparrow$ & Brier$\downarrow$ & ECE$\downarrow$ & AURC$\downarrow$\\
\midrule
\multicolumn{7}{l}{\emph{5\% labeled records}}\\
Supervised & .808$\pm$.005 & .587$\pm$.010 & .578$\pm$.004 & .166$\pm$.003 & .067$\pm$.016 & .120$\pm$.003\\
DAE & .809$\pm$.011 & .588$\pm$.019 & .569$\pm$.017 & .170$\pm$.010 & .111$\pm$.036 & .110$\pm$.010\\
MAE & .835$\pm$.002 & .628$\pm$.012 & .610$\pm$.005 & .152$\pm$.006 & .046$\pm$.021 & .105$\pm$.005\\
PatchTST & .809$\pm$.009 & .587$\pm$.021 & .575$\pm$.007 & .165$\pm$.010 & \textbf{.026}$\pm$.013 & .128$\pm$.014\\
SimMTM & .835$\pm$.005 & .634$\pm$.006 & .607$\pm$.009 & .151$\pm$.005 & .049$\pm$.019 & .098$\pm$.008\\
ST-MEM & .818$\pm$.005 & .612$\pm$.010 & .591$\pm$.007 & .161$\pm$.007 & .049$\pm$.030 & .116$\pm$.010\\
MORSE & \textbf{.852}$\pm$.004 & \textbf{.662}$\pm$.005 & \textbf{.622}$\pm$.004 & \textbf{.148}$\pm$.007 & .038$\pm$.014 & \textbf{.097}$\pm$.007\\
\midrule
\multicolumn{7}{l}{\emph{10\% labeled records}}\\
Supervised & .827$\pm$.005 & .624$\pm$.009 & .592$\pm$.011 & .159$\pm$.011 & .069$\pm$.010 & .110$\pm$.011\\
DAE & .826$\pm$.006 & .613$\pm$.012 & .596$\pm$.012 & .154$\pm$.006 & .079$\pm$.016 & .096$\pm$.008\\
MAE & .842$\pm$.006 & .640$\pm$.015 & .620$\pm$.005 & .148$\pm$.005 & .043$\pm$.020 & .099$\pm$.008\\
PatchTST & .823$\pm$.011 & .612$\pm$.026 & .590$\pm$.015 & .162$\pm$.012 & \textbf{.030}$\pm$.016 & .120$\pm$.014\\
SimMTM & .849$\pm$.005 & .659$\pm$.007 & .622$\pm$.008 & .144$\pm$.005 & .043$\pm$.013 & .090$\pm$.007\\
ST-MEM & .835$\pm$.010 & .637$\pm$.022 & .604$\pm$.012 & .156$\pm$.011 & .052$\pm$.020 & .106$\pm$.012\\
MORSE & \textbf{.861}$\pm$.006 & \textbf{.680}$\pm$.010 & \textbf{.638}$\pm$.007 & \textbf{.138}$\pm$.003 & .041$\pm$.013 & \textbf{.085}$\pm$.003\\
\bottomrule
\end{tabular}
\end{table}

Tables~\ref{tab:app-ptb-paired} and~\ref{tab:app-ptb-full} report both label budgets. MORSE leads AUROC, AUPRC, Macro-F1, Brier score, and AURC at both budgets, while PatchTST retains the lowest ECE. The results therefore support low-label transfer rather than calibration dominance.

\paragraph{Checkpoint provenance.}
The reported PTB-XL numbers are computed from logged per-epoch test metrics at the epoch selected by validation Macro-AUROC. The corresponding selected-epoch weights were not retained; the archived fine-tuned weights are those of the final epoch.

\paragraph{Frozen $U_{\mathrm{rel}}$ readout.}
On a separately retained three-seed checkpoint set, each test record receives the temperature $T_0\exp(\gamma z)$, where $T_0$ is the global temperature, $z$ is the standardized frozen $U_{\mathrm{rel}}$ of the pretrained model, and $\gamma$ is chosen by validation BCE. Compared with the global temperature alone, this lowers ECE from .024 to .016 and AURC from .098 to .087 on clean test records, and ECE from .079 to .040 and AURC from .180 to .177 when 50\% of each record is masked. Because $\gamma$, including its sign, is fitted on validation data, the result shows that $U_{\mathrm{rel}}$ carries sample-wise information; it is not a zero-shot error score, and ranked directly it orders errors inversely (error-AUROC .34 on clean records).

\subsection{IMPTC}
\label{app:imptc}

The Infrastructural Multi-Person Trajectory and Context (IMPTC) dataset,
version 1.0, provides multi-person trajectories and contextual
features~\citep{hetzel2023imptcpaper,hetzel2023imptc}. The balanced
three-class future-event-mode task (FEM3) uses
50-step input windows with 46 motion and context features and predicts three
future-event modes: continuation/free flow, yield/slow/stop, and
directional/crossing/avoidance. The main protocol fine-tunes partially with
30\% labels and evaluates both an in-distribution split and held-out
environment 5.

Windows use horizon 50 and stride 25. The encoder/decoder have three/two
Transformer layers, dimension 128, feed-forward dimension 256, eight heads,
and dropout .10. RAdam uses learning rate $10^{-4}$, batch size 32, maximum
gradient norm 4, and ten SSL/fine-tuning epochs; the learning rate is halved
each epoch. Partial fine-tuning updates the final encoder block and classifier.
A stratified 10\% validation split (seed 2026) selects Macro-F1. Main results
use seeds 1--5. The split audit confirms disjoint source tracks and sequences,
with no overlap between input/future windows across partitions.

Models are selected only by validation Macro-F1. Five-seed in-distribution
means favor MORSE across the displayed task, Brier, and AURC metrics, whereas
the held-out-environment comparison is mixed across MORSE, the masked
autoencoder, and
PatchTST-style. This contrast is treated as a claim boundary: contextual
observability transfers to the trajectory task but does not ensure universal
environment robustness.
The reconstruction baselines are a denoising autoencoder (DAE), a masked
autoencoder (Masked AE), and controlled PatchTST- and SimMTM-style adaptations.

\begin{table*}[!t]
\caption{IMPTC in-distribution Balanced FEM3, 30\% labels and partial
fine-tuning over five seeds (mean $\pm$ sample standard deviation).}
\label{tab:app-imptc-id}
\centering
\small
\setlength{\tabcolsep}{2.5pt}
\resizebox{\linewidth}{!}{%
\begin{tabular}{lrrrrrrr}
\toprule
Method & Accuracy$\uparrow$ & Macro-F1$\uparrow$ & Balanced$\uparrow$ &
Directional F1$\uparrow$ & Brier$\downarrow$ & ECE$\downarrow$ &
AURC$\downarrow$\\
\midrule
Supervised & .7154$\pm$.0029 & .6362$\pm$.0178 & .6479$\pm$.0109 &
.2951$\pm$.0552 & .3751$\pm$.0009 & .0243$\pm$.0074 & .1435$\pm$.0010\\
DAE & .7240$\pm$.0027 & .6565$\pm$.0074 & .6622$\pm$.0052 &
.3392$\pm$.0215 & .3539$\pm$.0019 & .0211$\pm$.0007 & .1309$\pm$.0016\\
Masked AE & .7304$\pm$.0027 & .6645$\pm$.0049 & .6693$\pm$.0032 &
.3539$\pm$.0161 & .3459$\pm$.0026 & \textbf{.0171}$\pm$.0034 &
.1284$\pm$.0030\\
PatchTST-style & .7363$\pm$.0015 & .6742$\pm$.0150 & .6783$\pm$.0102 &
.3697$\pm$.0503 & .3346$\pm$.0059 & .0244$\pm$.0064 & .1193$\pm$.0049\\
SimMTM-style & .7338$\pm$.0018 & .6563$\pm$.0049 & .6655$\pm$.0030 &
.3152$\pm$.0175 & .3414$\pm$.0024 & .0248$\pm$.0056 & .1244$\pm$.0020\\
\textbf{MORSE} & \textbf{.7407}$\pm$.0018 & \textbf{.6882}$\pm$.0087 &
\textbf{.6878}$\pm$.0066 & \textbf{.4084}$\pm$.0339 &
\textbf{.3273}$\pm$.0036 & .0319$\pm$.0038 &
\textbf{.1081}$\pm$.0034\\
\bottomrule
\end{tabular}%
}
\end{table*}

\begin{table*}[!t]
\caption{IMPTC held-out environment 5 under the same protocol (five seeds,
mean $\pm$ sample standard deviation). The mixed ranking is an explicit boundary on
the paper's out-of-distribution claim.}
\label{tab:app-imptc-ood}
\centering
\small
\setlength{\tabcolsep}{2.5pt}
\resizebox{\linewidth}{!}{%
\begin{tabular}{lrrrrrrr}
\toprule
Method & Accuracy$\uparrow$ & Macro-F1$\uparrow$ & Balanced$\uparrow$ &
Directional F1$\uparrow$ & Brier$\downarrow$ & ECE$\downarrow$ &
AURC$\downarrow$\\
\midrule
Supervised & .6194$\pm$.0495 & .4714$\pm$.0422 & .5352$\pm$.0430 &
.0030$\pm$.0046 & .4956$\pm$.0657 & .0435$\pm$.0175 & .2688$\pm$.0781\\
DAE & .7134$\pm$.0029 & .6022$\pm$.0149 & .6311$\pm$.0059 &
.1711$\pm$.0506 & .3822$\pm$.0025 & \textbf{.0212}$\pm$.0038 &
.1621$\pm$.0044\\
Masked AE & .7199$\pm$.0032 & \textbf{.6202}$\pm$.0140 & .6418$\pm$.0067 &
\textbf{.2196}$\pm$.0455 & .3731$\pm$.0023 & .0237$\pm$.0099 &
.1569$\pm$.0029\\
PatchTST-style & \textbf{.7256}$\pm$.0052 & .6158$\pm$.0095 &
.6418$\pm$.0068 & .1899$\pm$.0264 & \textbf{.3708}$\pm$.0077 &
.0249$\pm$.0088 & \textbf{.1568}$\pm$.0084\\
SimMTM-style & .7198$\pm$.0025 & .6022$\pm$.0090 & .6335$\pm$.0041 &
.1570$\pm$.0264 & .3807$\pm$.0024 & .0346$\pm$.0066 & .1656$\pm$.0032\\
\textbf{MORSE} & .7225$\pm$.0084 & .6129$\pm$.0122 &
\textbf{.6425}$\pm$.0078 & .1891$\pm$.0203 & .3761$\pm$.0131 &
.0411$\pm$.0224 & .1595$\pm$.0126\\
\bottomrule
\end{tabular}%
}
\end{table*}

\begin{table*}[!t]
\caption{Matched-seed IMPTC comparisons with the two closest reconstruction
baselines. Positive $\Delta$ favors MORSE. Intervals are deterministic paired
bootstrap 95\% CIs (20,000 replicates); $d_z$ is the paired standardized
effect. With five pairs, the minimum two-sided exact sign-test $p$ is .0625;
intervals, wins, and effects are reported without significance claims.}
\label{tab:app-imptc-paired}
\centering
\small
\setlength{\tabcolsep}{4.2pt}
\renewcommand{\arraystretch}{0.88}
\begin{tabular}{lllrrr}
\toprule
Split & Baseline & Metric & Oriented $\Delta$ [95\% CI] & Wins/5 & $d_z$\\
\midrule
In-dist. & Masked AE & Macro-F1 & $.0237\ [.0154,.0313]$ & 5 & 2.30\\
In-dist. & Masked AE & Brier & $.0186\ [.0144,.0228]$ & 5 & 3.41\\
In-dist. & Masked AE & ECE & $-.0147\ [-.0179,-.0111]$ & 0 & $-3.46$\\
In-dist. & Masked AE & AURC & $.0203\ [.0159,.0247]$ & 5 & 3.62\\
In-dist. & PatchTST-style & Macro-F1 & $.0140\ [.0078,.0218]$ & 5 & 1.57\\
In-dist. & PatchTST-style & Brier & $.0073\ [.0005,.0140]$ & 3 & .85\\
In-dist. & PatchTST-style & ECE & $-.0075\ [-.0129,-.0021]$ & 0 & $-1.03$\\
In-dist. & PatchTST-style & AURC & $.0112\ [.0055,.0169]$ & 5 & 1.54\\
\midrule
Held-out env. 5 & Masked AE & Macro-F1 & $-.0072\ [-.0211,.0059]$ & 2 & $-.41$\\
Held-out env. 5 & Masked AE & Brier & $-.0031\ [-.0130,.0079]$ & 2 & $-.23$\\
Held-out env. 5 & Masked AE & ECE & $-.0174\ [-.0412,.0043]$ & 2 & $-.58$\\
Held-out env. 5 & Masked AE & AURC & $-.0026\ [-.0117,.0081]$ & 2 & $-.20$\\
Held-out env. 5 & PatchTST-style & Macro-F1 & $-.0028\ [-.0133,.0088]$ & 2 & $-.20$\\
Held-out env. 5 & PatchTST-style & Brier & $-.0053\ [-.0143,.0033]$ & 1 & $-.46$\\
Held-out env. 5 & PatchTST-style & ECE & $-.0163\ [-.0383,.0010]$ & 1 & $-.65$\\
Held-out env. 5 & PatchTST-style & AURC & $-.0027\ [-.0104,.0049]$ & 2 & $-.27$\\
\bottomrule
\end{tabular}
\end{table*}

The paired audit sharpens the aggregate interpretation. MORSE improves the
in-distribution Macro-F1, Brier, and AURC over the masked autoencoder on all
five paired seeds,
but its in-distribution ECE is
worse on all five. The held-out-environment intervals cross zero for the task,
Brier, and AURC comparisons, so that split supports no uniform
out-of-distribution advantage. Table~\ref{tab:app-imptc-missing} masks a random
fraction of feature tokens at evaluation on the same held-out environment.
Accuracy and both selective-risk scores degrade beyond 25\% missingness, and
the native score $\urel$ ranks errors worse than softmax confidence at every
level, so the reconstruction factor is not a substitute for task confidence.

\begin{table*}[!t]
\begin{minipage}[t]{.49\textwidth}
\caption{IMPTC split-integrity audit. Windows include the 50-step observation
and future-label horizons; all overlaps are computed before training.}
\label{tab:app-imptc-split-audit}
\centering
\small
\setlength{\tabcolsep}{3pt}
\resizebox{\linewidth}{!}{%
\begin{tabular}{lrrr}
\toprule
Partition & Windows & Tracks & Sequences\\
\midrule
Train & 52,533 & 1,925 & 189\\
Validation & 9,575 & 365 & 40\\
Test & 10,437 & 375 & 41\\
\midrule
Pair & \shortstack{Track\\overlap} & \shortstack{Sequence\\overlap} &
\shortstack{Full-window\\overlap}\\
Train--validation & 0 & 0 & 0\\
Train--test & 0 & 0 & 0\\
Validation--test & 0 & 0 & 0\\
\bottomrule
\end{tabular}%
}
\end{minipage}\hfill%
\begin{minipage}[t]{.49\textwidth}
\caption{MORSE under random feature-token missingness at evaluation
(held-out environment 5, 30\% labels, partial fine-tuning; mean of three
seeds). Conf.\ AURC defers by softmax confidence and native AURC by the
relative factor $\urel$.}
\label{tab:app-imptc-missing}
\centering
\small
\setlength{\tabcolsep}{3pt}
\begin{tabular}{rrrrrr}
\toprule
Missing & Accuracy & F1 & ECE & \shortstack{Conf.\\AURC} &
\shortstack{Native\\AURC}\\
\midrule
0 & .7269 & .6166 & .0323 & .1519 & .2035\\
.10 & .7249 & .6098 & .0422 & .1531 & .2056\\
.25 & .7167 & .5972 & .0518 & .1700 & .2157\\
.50 & .5928 & .4421 & .0912 & .3176 & .3998\\
.75 & .4835 & .2409 & .1899 & .5148 & .5223\\
\bottomrule
\end{tabular}
\end{minipage}
\end{table*}

\subsection{Metric and Selection Conventions}

AUROC, area under the precision--recall curve (AUPRC), Macro-F1, balanced
accuracy, and classwise F1 measure predictive transfer. Brier score measures
probabilistic accuracy, ECE summarizes binned
calibration, and AURC integrates error risk as low-confidence examples are
deferred~\citep{brier1950verification,guo2017calibration,geifman2017selective}.
Lower AURC is better. No test metric is used for checkpoint selection or
temperature fitting.

Main tables report mean and sample standard deviation when seed-level outputs are
available. Matched methods should additionally be compared through paired
differences, bootstrap confidence intervals, and effect sizes. With five
real-data seeds, intervals and consistency of paired effects are more
informative than a binary significance label.

\section{Reproducibility and Artifact Map}
\label{app:repro}

\subsection{Runs and Selection}

The fixed and parameterized pointwise diagnostics contain 50 fits/tasks. Main
window mechanism studies contain ten paired seeds. The strict empirical
reference contains 50 first-order models and five paired second-order anchor
datasets. The residual-coupling control uses 20 paired seeds. PTB-XL and IMPTC
main tables use five seeds. The GP audit uses five paired anchors per method, a
20-model reference bank, and fifteen additional models for the policy sweep; the
configuration scan (Appendix~\ref{app:gp-scan}) covers nine settings, each with
five paired anchors and its own 20-model reference bank, and reuses the default
checkpoints; the MC dropout-site study adds 45 final models from 180 candidate
and refit fits, and the matched component study 300 fits; the pulse proxy
reference uses twelve proxy models.
PTB-XL checkpoints are selected by validation Macro-AUROC and IMPTC
checkpoints by validation Macro-F1.

\subsection{Frozen Configuration and Seed Axes}

The machine-readable freeze is
\url{configs/protocols.json} in the supplementary archive.
It records the final objective, dataset-specific settings, every reported seed
axis, and SHA-256 hashes of implementation and result files. The compact seed
ledger is shown below; model seeds and evaluation seeds are deliberately kept
separate.

\begin{table*}[!t]
\caption{Frozen seed axes for reported experiments. Ranges are inclusive.}
\label{tab:app-seeds}
\centering
\small
\begin{tabular}{>{\raggedright\arraybackslash}p{.24\textwidth}>{\raggedright\arraybackslash}p{.30\textwidth}>{\raggedright\arraybackslash}p{.34\textwidth}}
\toprule
Experiment & Model/data seeds & Fixed auxiliary seeds\\
\midrule
Fixed and parameterized one-dimensional tasks & 10000--10049 & fixed-generator settings in merge manifests\\
Pulse-window comparison/\allowbreak components/$K$ & 1--10 & paired masks and data within each seed\\
MORSE versus MORSE-RC & 1--20 & paired data, initialization, and masks\\
Strict five-anchor reference & anchors 900001--900005; reference 1--50 & test 424242; normalization 777777; init/mask/eval/loader 12001--12004\\
Strict $N$ sensitivity & anchors 900001--900010; reference 1--50 at each $N$ & same 424242, 777777, and 12001--12004\\
Pulse proxy reference & standard 700011--700018; large 700021--700024 & 128-mask bank 424243\\
GP audit & anchors 830001--830005; reference 830011--830030; ensembles 830031--830080 & test 830101; calibration 830000\\
GP configuration scan & the GP audit seeds in every setting & oracle masks 971000; reserved masks 971001; interval bootstrap 971002\\
MC dropout sites & the GP audit seeds in every setting & holdout 820000+anchor; draws from 820001; interval bootstrap 971002\\
Component study, GP & anchors 830001--830005; reference 830011--830030 & normalization 830000; test 830101; init/loader/dropout 12001/12004/12005; intervals 730001\\
Component study, pulse & seeds 1--10 & training masks 220001+epoch; response bank 424271\\
PTB-XL low-label & 1--5 & official folds; matched label subsets\\
IMPTC Balanced FEM3 & 1--5 & validation/split seed 2026\\
\bottomrule
\end{tabular}
\end{table*}

\subsection{Data and Split Integrity}

PTB-XL version 1.0.3 is used under CC BY 4.0. IMPTC version 1.0 is obtained
from Zenodo record 14811016 under CC BY-NC 4.0. Synthetic data are generated
entirely by supplied code. Raw external datasets and model checkpoints are not
redistributed. Reproduction scripts should verify PTB-XL patient separation and
IMPTC sequence/track/window separation before training. The validation-selected
PTB-XL fine-tuning weights were not retained (Appendix~\ref{app:ptb}).
The code base includes utilities adapted from Time-Series-Library (MIT
license), some of which that library adapted from other projects;
\texttt{THIRD\_PARTY\_NOTICES.md} in the supplement lists the files and
licenses.

\subsection{Environment and Compute}

The recovered environment uses Python 3.12.9, PyTorch 2.6.0+cu124, NVIDIA
Compute Unified Device Architecture (CUDA) 12.4,
NumPy 2.3.4, SciPy 1.16.3, scikit-learn 1.6.1, pandas 2.3.3, and Matplotlib
3.10.1. Runs use single NVIDIA A100- or V100-family graphics processing units (GPUs)
and no mixed precision; the GP audit and part of the checkpoint retraining ran on
V100-SXM2-32GB GPUs. The exact central processing unit (CPU)
model, accelerator-memory variant, driver revision, and total project-wide
compute were not preserved in the frozen logs and are therefore not used for
any efficiency claim. The measured $K$-view window costs are reported in
Table~\ref{tab:app-kview} and the GP training times in Appendix~\ref{app:gp}. The matched component study took 6.6 and the MC dropout-site study 3.2 V100 GPU-hours of logged training.

\subsection{Claim-to-Evidence Map}

Table~\ref{tab:app-claim-map} maps each claim to its primary evidence and to
the boundary it requires.

\begin{table}[!t]
\caption{Claim-to-evidence map for this paper.}
\label{tab:app-claim-map}
\centering
\small
\begin{tabular}{>{\raggedright\arraybackslash}p{0.28\textwidth}>{\raggedright\arraybackslash}p{0.30\textwidth}>{\raggedright\arraybackslash}p{0.32\textwidth}}
\toprule
Claim & Primary evidence & Scope \\
\midrule
Marginal fitting does not identify the relative allocation
& Proposition~\ref{prop:allocation}
& Concerns evidence from the marginal, not uniqueness under arbitrary regularization \\
Cross-context mean variance is observable and has a population limit
& Propositions~\ref{prop:between-context} and~\ref{prop:population}
& Policy-relative; not a parameter posterior \\
The learned score aligns with the contextual reference
& GP audit, Tables~\ref{tab:gp-bayes}, \ref{tab:gp-two-reference}, \ref{tab:gp-context-intervals}, and~\ref{tab:gp-scan}
& Four-context score; scale error in nine GP settings, five anchors each, not ordering (MC dropout has the higher Spearman correlation in five); MC dropout close at length scale .10; undercoverage; benchmark-dependent (Table~\ref{tab:pulse-extended}) \\
The learned score costs fewer inference passes than direct repeated masking of comparable accuracy
& GP amortization audit, Appendix~\ref{app:gp}
& Default setting; four passes versus 16 to 128; inference only, not training cost \\
Contextual and training-set variation are distinct
& GP two-reference audit in nine settings (Table~\ref{tab:gp-scan}); Proposition~\ref{prop:population}
& MORSE--bootstrap reversal in all nine GP settings; disabling attention dropout moves MC dropout toward $\vref$ (Table~\ref{tab:mc-sites}), by refitting rather than an inference-only switch; not observed on the pulse benchmark \\
The predictor's contextual response converges with $N$
& Pulse training-size audit, Table~\ref{tab:pulse-n}
& Approximate proxy reference, not exact Bayes \\
MORSE pretraining improves low-label transfer
& PTB-XL 5\%/10\% tables and IMPTC in-distribution table
& Controlled baselines; no calibration or out-of-distribution claim; selected-epoch weights not retained \\
Bound, decoupling, and normalization drive scale alignment; the head is interchangeable
& Matched component and head study, Appendix~\ref{app:components}
& Fixed-boundary GP and a new pulse cohort; one initializer; Condition and Order help fitting and ranking, not scale; NIG improves pulse selective metrics only \\
\bottomrule
\end{tabular}
\end{table}

\subsection{Artifact Validation}

The supplementary code archive contains the final MORSE implementation, the synthetic generators, the PTB-XL and IMPTC preprocessing, launchers, aggregation code, seed-level results, and table exporters. Its validator (\texttt{scripts/validate\_reproducibility\_package.py}) checks expected method sets and seed counts, shared hashes for paired anchors, disjoint anchor and reference seeds, and checksums for the retained artifacts. \texttt{scripts/reproduce\_gp.py} recomputes Table~\ref{tab:gp-bayes} and every Figure~\ref{fig:gp-two-references} bin on a CPU from the released scores. The recorded GP validation (\texttt{results/gp/validation.json}) covers 227 input checksums, 90 training manifests, and 60 paired rows.

\subsection{Claim Summary}

MORSE learns a score grounded in policy-relative observability: how strongly a prediction depends on its visible context. Its contribution is to replace an unobserved semantic allocation with a controlled behavioral target whose assumptions can be tested.

\end{document}